\documentclass{article}

\usepackage{iclr2027_conference,times}
\usepackage{amsmath,amsfonts,bm}

\def\eqref#1{equation~\ref{#1}}
\def\1{\bm{1}}

\DeclareMathAlphabet{\mathsfit}{\encodingdefault}{\sfdefault}{m}{sl}
\SetMathAlphabet{\mathsfit}{bold}{\encodingdefault}{\sfdefault}{bx}{n}

\newcommand{\boldx}{\boldsymbol{x}}

\newcommand{\target}{y}

\newcommand{\lossfunc}{\ell}
\newcommand{\trainloss}{\mathcal{L}}
\newcommand{\boldw}{\boldsymbol{w}}

\newcommand{\boldv}{\boldsymbol{v}}
\newcommand{\boldz}{\boldsymbol{z}}
\newcommand{\boldu}{\boldsymbol{u}}

\newcommand{\boldzero}{\boldsymbol{0}}

\newcommand{\calS}{\mathcal{S}}

\newcommand{\boldb}{\boldsymbol{b}}

\newcommand{\boldU}{\boldsymbol{U}}

\newcommand{\boldLambda}{\boldsymbol{\Lambda}}
\newcommand{\boldSigma}{\boldsymbol{\Sigma}}

\usepackage{amsmath,amsthm}
\newtheorem{theorem}{Theorem}
\newtheorem*{theorem*}{Theorem}
\newtheorem{lemma}{Lemma}
\newtheorem{proposition}[lemma]{Proposition}
\newtheorem{definition}{Definition}
\usepackage{tabularx}
\usepackage{enumitem}
\usepackage[utf8]{inputenc}
\usepackage[T1]{fontenc}
\usepackage[english]{babel}
\usepackage[babel=true]{microtype}
\usepackage{graphicx}
\usepackage{csquotes}
\usepackage{makecell}
\usepackage{subcaption}
\usepackage{bm}
\usepackage{url}
\usepackage{hyperref}
\usepackage{float}
\usepackage{array,multirow}
\usepackage[flushleft]{threeparttable}
\usepackage{booktabs}
\hypersetup{
    colorlinks=true,
    linkcolor=blue,
    citecolor=blue,
    filecolor=magenta,
    urlcolor=blue,
    pdftitle={Whitening Improves Robustness to Spurious Correlations in Linear Probes},
    pdfpagemode=FullScreen,
}

\title{Whitening Improves Robustness to Spurious Correlations in Linear Probes}
\author{
Floris Holstege$^{1,2}$,
Bram Wouters$^{1}$,
Noud van Giersbergen$^{1}$,
Cees Diks$^{1,2}$ \\
$^{1}$University of Amsterdam, Department of Quantitative Economics
\hspace{0.2cm}
$^{2}$Tinbergen Institute \\
\texttt{\{f.g.holstege,b.m.wouters,n.p.a.vangiersbergen,c.g.h.diks\}@uva.nl}
\vspace{-0.2cm}
}
\iclrfinalcopy
\begin{document}

\maketitle

\begin{abstract}
Deep neural networks tend to rely on simple features that may be spurious and thus fail to generalize. 
We study this problem in the setting of linear probes, where a (generalized) linear model is fitted on the representations of a (pretrained) model. 
We use the connection of these models to the max-margin classifier, and show they favor directions associated with large eigenvalues of the covariance matrix. 
Whitening removes this preference by equalizing the eigenvalues of the covariance matrix. 
This observation motivates whitening as a preprocessing step that can reduce reliance on spurious correlations without requiring prior knowledge of their presence or labeled data. 
We examine the effect of whitening on a synthetic data-generating process and standard spurious correlation benchmarks, and find that it improves robustness. 
We also find that whitening can improve robustness when added to existing approaches.
\end{abstract}

\section{Introduction}
Deep neural networks (DNNs) have shown a tendency to use `simple' features for prediction, a phenomenon described as the \emph{simplicity bias} \citep{arpit2017closerlookmemorizationdeep, pmlr-v97-rahaman19a, Kalimeris2019IncreasingComplexity, Morwani}.
Previous work argues that the simplicity bias of DNNs contributes to them relying on so-called \emph{spurious correlations} \citep{geirhos_shortcut, vasudeva2023mitigatingsimplicitybiasdeep}.
For example, models can achieve strong performance by exploiting a simple feature (e.g., background/texture in images, \citet{geirhos2022imagenettrainedcnnsbiasedtexture}), yet suffer a large drop in performance when this cue is absent \citep{sagawa2020distributionallyrobustneuralnetworks, pmlr-v139-koh21a, Xiao2021Backgrounds}.

In this work, we focus on \textit{linear probes}, (generalized) linear models fitted on a representation generated by a DNN. %
Linear probes are widely used for transfer learning \citep{Donahue2014DeCAF, Azizpour2016Transferability, Kornblith2019Transfer, Bar2024FrozenFeatures}, and have been shown to be sufficient to address spurious correlations \citep{kirichenko2023layerretrainingsufficientrobustness, izmailov2022featurelearningpresencespurious,labonte2023lastlayerretraininggrouprobustness, qiu2024complexitymattersdynamicsfeature, hill2025unreasonableeffectivenesslastlayerretraining}.
We start by analyzing the inclination of the max-margin classifier to rely on \textit{simple} features, defined as features associated with large eigenvalues of the covariance matrix \citep{Maennel2020RandomLabels, pezeshki2021gradientstarvationlearningproclivity, Baratin2021FeatureAlignment, bombari2025spuriouscorrelationshighdimensional}.
The max-margin classifier is relevant, as it explains the behavior of $\ell_2$-regularized logistic regression on separable data when regularization approaches zero \citep{Rosset_2003} or when trained with gradient descent \citep{soudry2018implicit,Nacson2019Separable}. It also explains some types of DNNs (see \citet{gunasekar, lyu2020gradientdescentmaximizesmargin, Chizat2020ImplicitBias}).
Our analysis builds upon earlier work on the behavior of the max-margin classifier in the presence of spurious correlations \citep{pmlr-v119-sagawa20a, Kini2021GroupSensitive, puli_2023, nagarajan2024understandingfailuremodesoutofdistribution}.

We then show that whitening the data (centering, scaling to unit variance, and removing correlations) can remove the tendency of the max-margin classifier to rely on simple features.
This suggests a simple strategy for reducing the tendency of linear probes to rely on spurious correlations: applying a whitening transformation before fitting.
We show empirically that whitening can yield substantial gains in terms of robustness against spurious correlations. 
Importantly, this requires \emph{no pre-existing knowledge of the existence or nature of the spurious correlation} nor additional labels, and is computationally light.
We find that whitening can also be used to improve existing approaches that address spurious correlations such as Deep Feature Reweighting \citep{kirichenko2023layerretrainingsufficientrobustness}.

This paper is structured as follows. In Section~\ref{sec:related}, we briefly discuss related work.
In Section~\ref{sec:theory}, we show the max-margin classifier has a simplicity bias that can be addressed with whitening.
Section~\ref{sec:synthetic-dgp} illustrates the effect of whitening on a synthetic data-generating process from the literature on spurious correlations. 
In Section~\ref{sec:experiments} we evaluate the effect of whitening on linear probes for several spurious correlation benchmarks. 
Finally, we discuss limitations and conclude in Section~\ref{sec:conclusion}.

\section{Related work}
\label{sec:related}

\textbf{Spurious correlations \& simplicity bias in DNNs}: The problem of DNNs relying on spurious correlations is widespread across different architectures, modalities and datasets \citep{Lapuschkin_2019, pmlr-v139-koh21a}.
 One frequently mentioned cause for this reliance on spurious correlations is that DNNs tend to focus on `simple features' \citep{shah_2020}. 
 While many definitions of simple features exist, they commonly relate to the spectrum of the covariance matrix \citep{pezeshki2021gradientstarvationlearningproclivity, bombari2025spuriouscorrelationshighdimensional}. 
 We adopt a similar definition of simplicity.

\textbf{Approaches for addressing spurious correlations}: A wide range of approaches have been suggested for addressing spurious correlations (see \citet{ye2025cleverhansmiragecomprehensive} for a full overview). 
 The seminal work of \citet{kirichenko2023layerretrainingsufficientrobustness} suggests that in order to address spurious correlations, it is sufficient to focus on retraining the last layer of a DNN. 
 While sometimes insufficient \citep{le2024layerretrainingtrulysufficient, ye2025freezetrainprovablerepresentation}, approaches addressing spurious correlations at the last layer have proven remarkably effective \citep{izmailov2022featurelearningpresencespurious, hill2025unreasonableeffectivenesslastlayerretraining}.
However, most of these approaches rely on pre-existing knowledge of the spurious correlation, as well as auxiliary labels associated with the spurious feature. 
While some approaches only rely on such labels for a validation set (see for example \citet{pmlr-v139-liu21f} or \citet{qiu2023simplefastgrouprobustness}), only a few methods have been suggested that work without any auxiliary labels for both training and validation (see, for example, \citet{bayat2024pitfallsmemorizationmemorizationhurts} or \citet{le2024spuriousityimprovingrobustnessspurious}).
 Our suggested approach falls in this latter category.

\textbf{Whitening in DNNs}: Whitening-based normalization layers have been used in order to stabilize and speed up the training of DNNs \citep{huang2018decorrelatedbatchnormalization, huang2019iterativenormalizationstandardizationefficient}, and even improve their robustness \citep{gao-etal-2022-kernel}.
 In the field of domain adaptation, whitening has been used to match source and target feature statistics \citep{sun2016correlationalignmentunsuperviseddomain, roy2020unsuperviseddomainadaptationusing}. 
More recently, \citet{cho2025controllablefeaturewhiteninghyperparameterfree} use whitening in combination with auxiliary labels to enforce fairness constraints. 
It has also been used to distinguish different concepts in the representations of neural networks \citep{Chen_2020}.
\citet{wadia2021whiteningsecondorderoptimization} find that whitening can hurt generalization as it removes information about the second moments of the data. In contrast, we show this information favors simple features, so removing it through whitening can improve out-of-distribution robustness.
A similar finding was reported by \citet{ghanooni2025mitigating}, who show that training semi-supervised DNNs to have representations with uniform eigenvalues improves generalization in linear probes. 
Our work shows that this effect can also be achieved by whitening the representation rather than (re)training the DNN.

\section{Simplicity bias of the max-margin classifier}
\label{sec:theory}

Let $\{(\boldx_i,\target_i)\}_{i=1}^n$ be a dataset with
$\target_i\in\{-1,1\}$ and $\boldx_i\in\mathbb{R}^p$.
Throughout, we assume the data is centered, $\sum_{i=1}^n \boldx_i = \boldzero$.
We consider the standard setup of a logistic loss with coefficients $\boldw\in\mathbb{R}^p$ and $\ell_2$ regularization (excluding the intercept):
\begin{align}
\trainloss(\boldw)
\;:=\;
\frac{1}{n}\sum_{i=1}^n \lossfunc\!\left(\target_i\,\boldw^\top \boldx_i\right) + \tau \|\boldw \|_2^2,
\qquad
\lossfunc(u):=\log(1+e^{-u}). \label{eq:log_loss}
\end{align}
To simplify our analysis, we assume the data is linearly separable and first focus on the case where we optimize $\trainloss(\boldw)$ in the limit $\tau \rightarrow 0$.
In that case, the direction of the minimizer converges to the direction of the max-margin classifier \citep{Rosset_2003}, which is given by
\begin{align}
\hat{\boldw}_{\mathrm{mm}}
=
\arg\min_{\boldw\in\mathbb{R}^p}\|\boldw\|_2^2
\quad
\text{s.t.}
\quad
\target_i\,\boldw^\top\boldx_i\ge 1,\ \forall i.
\label{eq:mm_original}
\end{align}
Note that the max-margin classifier also arises when we minimize $\trainloss(\boldw)$ with $\tau=0$ on linearly separable data using gradient descent (GD) \citep{soudry2018implicit}.
We consider the case of a non-zero $\tau$ in Section \ref{sec:l2}.

Many definitions of `simple' features focus on the spectrum of the empirical covariance matrix. This matrix and its eigendecomposition are given by
\begin{align}
\hat{\boldSigma}
\;:=\;
\frac1{n-1}\sum_{i=1}^n \boldx_i\boldx_i^\top
\;=\;
\boldU\hat{\boldLambda}\boldU^\top,
\end{align}
with empirical eigenvalues
$\hat{\lambda}_1\ge\cdots\ge\hat{\lambda}_r>0$ and
$\hat{\lambda}_{r+1}=\cdots=\hat{\lambda}_p=0$,
where $r=\operatorname{rank}(\hat{\boldSigma})$, and $\boldU$ is an orthonormal matrix.
To quantify the extent to which a classifier relies on simple features, we use the Rayleigh quotient.
\begin{definition}[Rayleigh quotient \& simplicity]
\label{def:simplicity}
For a classifier $\boldw\in\mathbb{R}^p$, its Rayleigh quotient is 
\begin{align}
\mathrm{rq}(\boldw)
\;:=\;
\frac{\boldw^\top \hat{\boldSigma}\,\boldw}{\boldw^{\top}\boldw}
\;\in\;[0,\hat{\lambda}_1].
\end{align}
We call $\boldw$ simpler than $\boldw'$ when $\mathrm{rq}(\boldw) > \mathrm{rq}(\boldw')$.
\end{definition}
Using the Rayleigh quotient to quantify simplicity is inspired by previous work that argues in GD, learning is dominated by eigenvectors of $\hat{\boldSigma}$ with large eigenvalues, as the eigenvalues accelerate learning along those directions \citep{pezeshki2021gradientstarvationlearningproclivity, ghanooni2025mitigating}.

We claim that the max-margin classifier favors simple features according to Definition~\ref{def:simplicity}. To show this, define the observations $\boldx_i$ and classifier $\boldw$ in whitened coordinates, 
\begin{align}
\boldz_i := (\hat{\boldLambda}^\dagger)^{1/2}\boldU^\top\boldx_i,
\qquad
\boldv := \hat{\boldLambda}^{1/2}\boldU^\top\boldw.
\label{eq:whiten-map}
\end{align}

\noindent Here, $\dagger$ denotes the Moore--Penrose inverse. The max-margin classifier of \eqref{eq:mm_original} can be rewritten in the whitened coordinates as
\begin{align}
\hat{\boldw}_{\mathrm{mm}}=\boldU(\hat{\boldLambda}^\dagger)^{1/2}\hat{\boldv}_{\mathrm{mm}}, \qquad \text{with} \qquad
\hat{\boldv}_{\mathrm{mm}}=
\arg\min_{\boldv\in \mathcal{S}_z}
\boldv^\top\hat{\boldLambda}^\dagger \boldv
\quad \text{s.t.}\quad
\target_i\, \boldv^\top \boldz_i \ge 1,\ \forall i,
\label{eq:mm_whitened}
\end{align}
with $\mathcal{S}_z := \operatorname{span}\{\boldz_1,\dots,\boldz_n\}$. To see this equivalence, let $\calS:=\operatorname{span}\{\boldx_1,\ldots,\boldx_n\}=\operatorname{range}(\hat{\boldSigma})$ and decompose any $\boldw=\boldw_{\calS}+\boldw_\perp$, where $\boldw_\perp$ is in the null space of $\calS$. Because every $\boldx_i\in\calS$, replacing  $\boldw$ by $\boldw_{\calS}$ preserves the margins $\target_i\boldw^{\top}\boldx_i$ and does not increase the norm, so the minimizer of \eqref{eq:mm_original} belongs to $\calS$. On $\calS$, the change of variables is invertible and satisfies
\begin{align}
\boldx_i
=\boldU\hat{\boldLambda}^{1/2}\boldz_i,\quad
\boldw
=\boldU(\hat{\boldLambda}^\dagger)^{1/2}\boldv,\quad
\boldw^\top\boldx_i
=\boldv^\top\boldz_i,\quad
\|\boldw\|_2^2
=\boldv^\top\hat{\boldLambda}^\dagger\boldv,
\end{align}
which maps the constraints and objective in \eqref{eq:mm_original} exactly to those in \eqref{eq:mm_whitened}.

By writing the max-margin classifier in whitened coordinates, it becomes clear that it exhibits a preference for simplicity as defined in Definition~\ref{def:simplicity}.
In particular, we can write the minimum-norm objective in \eqref{eq:mm_whitened} as the weighted norm:
\begin{align}
    \boldv^\top\hat{\boldLambda}^\dagger \boldv
    = \sum^r_{k=1}\frac{v_k^2}{\hat{\lambda}_k}.
\end{align}
This representation reveals that components of $\boldv$ along directions with large empirical eigenvalues are weighted to a lesser extent, since they are divided by $\hat{\lambda}_k$. 
Thus, the max-margin objective tends toward features that are associated with large eigenvalues of the empirical covariance matrix, making the classifier simpler according to Definition~\ref{def:simplicity}.
We refer to this as the \textit{simplicity bias} of the max-margin classifier. 
This bias means that among separating classifiers, the objective can prefer classifiers that rely more on simpler features, even when those provide less favorable margins in the whitened representation.
We argue that this is undesirable in the context of spurious correlations,
based on the idea that these tend to align with simple features \citep{Teney2022EvadingSimplicity, pmlr-v238-yang24c}. 

The simplicity bias can be removed by whitening, which sets all nonzero empirical eigenvalues to one. Formally, this is achieved by transforming the data to whitened coordinates as in \eqref{eq:whiten-map}.
The max-margin classifier fitted on the
whitened data is given by
\begin{align}
\hat{\boldv}_{\mathrm{mm},\mathrm{whiten}} = \arg\min_{\boldv\in \mathcal{S}_z}
\|\boldv\|_2^2
\quad \text{s.t.}\quad
\target_i\, \boldv^\top \boldz_i \ge 1,\ \forall i.
\label{eq:mm-whitened-data}
\end{align}
The objective is now the standard $\ell_2$ norm and no longer weights a
direction according to its empirical eigenvalue. Consequently, the remaining
preference among separating directions is determined by their margins in the
whitened representation.

\subsection{The role of \texorpdfstring{$\ell_2$}{L2} regularization}
\label{sec:l2}
When the $\ell_2$ regularization penalty $\tau$ does not converge to 0,
the classifier does not necessarily converge to the max-margin classifier. However, we observe a similar simplicity bias for a fixed $\ell_2$ penalty $\tau > 0$.
For centered data and $\boldw \in \calS$, $ \boldv \in \calS_{\boldz}$, the change of variables in \eqref{eq:whiten-map} gives
\begin{align}
\target_i\,\boldw^\top\boldx_i = \target_i\,\boldv^\top\boldz_i
\qquad\text{and}\qquad
\|\boldw \|_2^2 = \boldv^{\top} \boldLambda^\dagger \boldv.
\label{eq:ridge-identities}
\end{align}
We can use this to write the loss function of \eqref{eq:log_loss} in whitened coordinates:
\begin{align}
\trainloss(\boldv)
=
\frac{1}{n}\sum_{i=1}^n
\lossfunc\left(\target_i\boldv^\top \boldz_i\right)
+ \tau \sum_{j=1}^r \frac{v_j^2}{\lambda_j}.
\end{align}
Hence, coefficients along directions associated with small eigenvalues receive a larger penalty.
$\ell_2$ regularization induces the same simplicity bias identified in the previous section.
Again, we can resolve this with whitening, as then the $\ell_2$-regularized loss becomes $\frac{1}{n}\sum_{i=1}^n
\lossfunc\left(\target_i\boldv^\top \boldz_i\right)
+ \tau \|\boldv \|_2^2$. We use a similar argument to show the relationship between whitening and \textit{spectral decoupling} \citep{pezeshki2021gradientstarvationlearningproclivity} in Appendix~\ref{app:SD}.

\subsection{Relation to standardization and uniform margins}
\label{sec:relations}
Whitening is closely related to two other approaches. The first is \textit{standardization}, 
where data is centered and scaled to have unit variance along each coordinate.
However, this does not create uniform eigenvalues, as there remain correlations between coordinates.
In addition, standardization is coordinate-dependent, while whitening is equivariant with respect to affine transformations of the data.

Second, whitening is also related to the concept of \textit{uniform margins} 
\citep{puli_2023}. Let
$m_i(\boldw):=\target_i\boldw^\top\boldx_i$ denote the signed training
margin. Since $\target_i^2=1$ and the data are centered,
\begin{align}
    \boldw^\top\hat{\boldSigma}\boldw
    =
    \frac{1}{n-1}\sum_{i=1}^n
    \bigl(\boldw^\top\boldx_i\bigr)^2
    =
    \frac{1}{n-1}\sum_{i=1}^n m_i(\boldw)^2.
\end{align}
Consequently, the whitened max-margin problem in
\eqref{eq:mm-whitened-data} is equivalent to
\begin{align}
    \min_{\boldw\in\calS}\;
    \frac{1}{n-1}\sum_{i=1}^n m_i(\boldw)^2
    \quad\text{s.t.}\quad
    m_i(\boldw)\geq1,\ \forall i.
\end{align}
The objective penalizes margins that are larger than required for
separation. When it is possible to have $m_i(\boldw)=1$ for all data points and to separate them, this solution is attained.
\citet{puli_2023} argue that uniform margins are desirable in the context of spurious correlations, providing an additional motivation for whitening.

\subsection{Estimating the covariance matrix}
\label{sec:est}

So far, we have used the empirical covariance matrix $\hat{\boldSigma}$ to whiten the data. 
In practice, this matrix can be nearly singular, especially when the number of features is large relative to the number of samples.
We then end up with very large values in $\hat{\boldLambda}^\dagger$.
This can lead to problems, as the scale of certain coordinates is massively inflated, despite the fact that they are not particularly informative.

A standard approach is to regularize the empirical covariance matrix by adding a small term to the eigenvalues \citep{Warton2008RidgeCovariance}.
We use the nonlinear shrinkage estimator of \citet{ledoit2020analytical}, which preserves the empirical eigenvectors but adjusts each eigenvalue separately, 
\begin{align}
    \hat{\boldSigma}_{\mathrm{LW}} =
\boldU_r
\operatorname{diag}
\left(
(\frac{\hat\lambda_1}{\hat\delta_1}), \dots,
(\frac{\hat\lambda_r}{\hat\delta_r}) \right)
\boldU_r^\top.
\end{align}
The adjustments $\hat\delta_i$ are chosen such that $\hat{\boldSigma}_{\mathrm{LW}}$ asymptotically minimizes the Frobenius norm of the difference with the covariance matrix $\boldSigma$ when $n,p\to\infty$ with $p/n$ converging to a positive constant.
We refer to $\hat{\boldSigma}_{\mathrm{LW}}$ as the \textit{nonlinear shrinkage estimator}, and unless otherwise mentioned use it when whitening. In Appendix~\ref{app:whitening-estimators} we compare whitening with this estimator to whitening with the empirical covariance matrix.

\section{Synthetic data-generating process}
\label{sec:synthetic-dgp}

We use a synthetic data-generating process (DGP) from the literature on spurious correlations \citep{pmlr-v119-sagawa20a,puli_2023, bayat2024pitfallsmemorizationmemorizationhurts} to illustrate the simplicity bias and the effect of whitening on the max-margin classifier.
We consider a training environment, labeled by $\mathrm{tr}$, and an out-of-distribution environment, labeled by $\mathrm{ood}$. Let $y \in\{-1,1\}$ denote the labels, and $a \in\{-1,1\}$ a spurious attribute.
The two environments have the same distribution of $\boldx$ given $(y,a)$; they differ only in the strength of the correlation between $y$ and $a$. We denote the environment via $e\in\{\mathrm{tr},\mathrm{ood}\}$. In environment $e$, let
\begin{align}
    a =
    \begin{cases}
        y & \text{with probability } \kappa_e,\\
        -y & \text{with probability } 1-\kappa_e,
    \end{cases}
    \qquad
    \rho_e:=2\kappa_e-1. \label{eq:synthetic-dgp}
\end{align}
Given $(y,a)$, we have the features $\boldx=(x_c,x_s,\boldsymbol{\epsilon}) \in \mathbb{R}^d$, where
\begin{align}
    x_s=\gamma x_a, \quad
    x_c \mid y \sim\mathcal{N}(y,\sigma_y^2), \quad
    x_a \mid a \sim\mathcal{N}(a,\sigma_a^2), \quad
    \boldsymbol{\epsilon}\sim
    \mathcal{N}\!\left(\boldsymbol{0},\frac{\sigma_\epsilon^2}{q}\bm{I}_q\right), \label{eq:synthetic-dgp-2}
\end{align}
where $q=d-2$. The first two coordinates, $x_c$ and $x_s$, are the `core' and `spurious'
features, respectively. The remaining $q$ features are (independent) noise.
The scaling of $\boldsymbol{\epsilon}$ gives
$\mathbb{E}\|\boldsymbol{\epsilon}\|_2^2=\sigma_\epsilon^2$. We center the dataset by subtracting the mean, and assume balanced labels, i.e., $\sum_{i=1}^{n_e}y_i=0$.

To empirically illustrate the effect of whitening for this synthetic DGP, we generate training data with $n=1000$ samples, $\gamma=5$, $\sigma_y=\sigma_a=0.1$, $\sigma_\epsilon=20$, $\kappa_{\mathrm{tr}}=0.9$.
We then fit a logistic regression on the original and whitened data, and evaluate the resulting models on a test dataset from the $\mathrm{ood}$ environment with $\kappa_{\mathrm{ood}}=0.5$.
Figure~\ref{fig:synthetic-decision-boundaries} shows how whitening changes the decision boundary induced by the max-margin classifier. 
With empirical risk minimization (ERM, \citet{Vapnik_1991}), which minimizes the logistic loss of the training data, the fitted coefficients rely to a greater extent on the spurious feature than when we use whitening.
\begin{figure}[H]
    \centering
    \includegraphics[width=\textwidth]{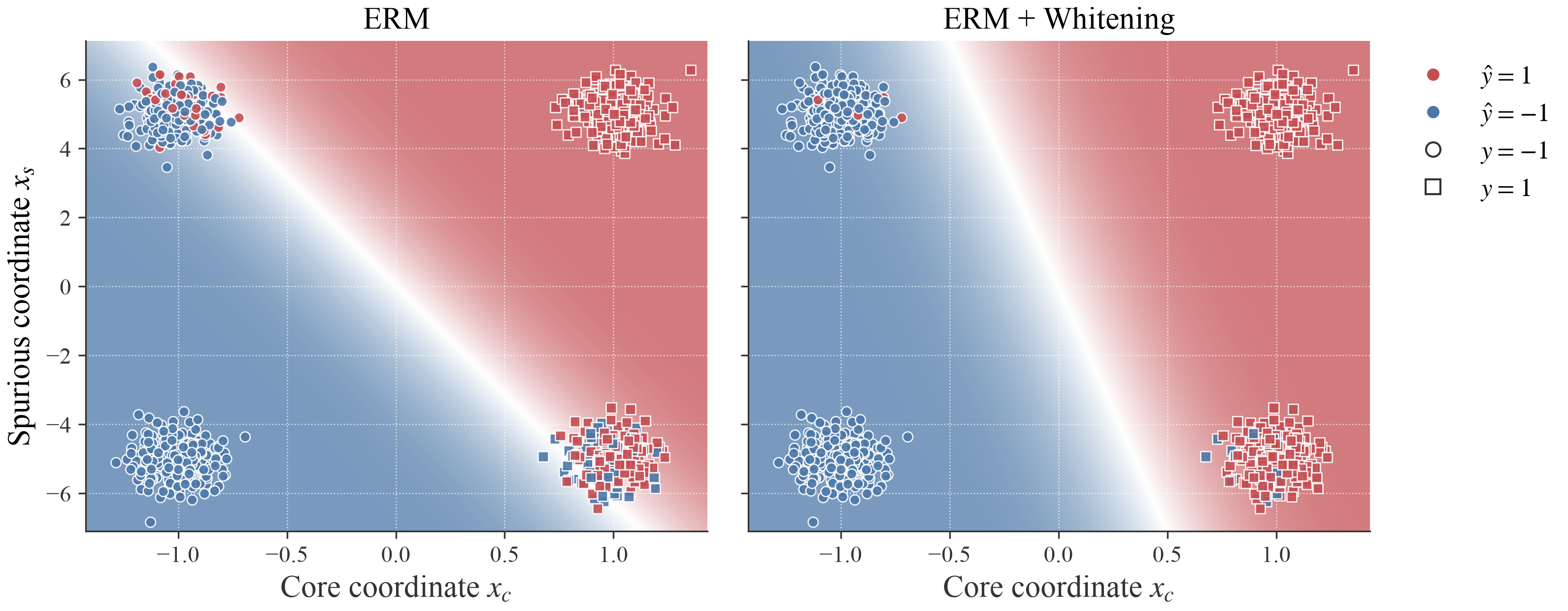}
    \caption{Predictions for ERM (left) and ERM after whitening (right) for $d=2000$, plotted against the core coordinate $x_c$ and spurious coordinate $x_s$, for 1000 sampled data points. Colors show predictions using all $d$ coordinates, and marker shapes show true classes. Background colors show the probability of predicting class $1$, averaged over the simulations, based on $x_c$ and $x_s$.}
    \label{fig:synthetic-decision-boundaries}
\end{figure} 
Previous work \citep{pmlr-v119-sagawa20a,puli_2023, bayat2024pitfallsmemorizationmemorizationhurts} has shown that the max-margin classifier in this DGP relies on the spurious feature $x_s$ when $\gamma$ is large enough.
This relates to the simplicity bias from Section~\ref{sec:theory}, as $\gamma$ inflates the scale (and the eigenvalues) of the spurious feature.
Data points that cannot be classified with the spurious feature are then fitted using the noise features. 

Figure~\ref{fig:synthetic-perf} illustrates how ERM fails to perform in the $\mathrm{ood}$ environment. As $q$ increases, there are enough noise coordinates that can be used to fit the minority groups ($y\neq a$, approximately 10\% of the sample) and classify the remainder with the spurious feature.
Whitening improves performance and yields nearly perfect overall and worst-group accuracy when $q<n$, and continues to strongly outperform standard ERM when $q\geq n$.

To formalize these empirical findings, we analyze the max-margin classifier for this DGP with whitening in a
specific asymptotic regime. In Theorem~\ref{thm:whitened-max-margin-limits}, we show the coefficients of the max-margin classifier fitted on whitened data converge to limits that mostly depend on the core feature.

\begin{figure}[h]
    \centering
    \includegraphics[width=\textwidth]{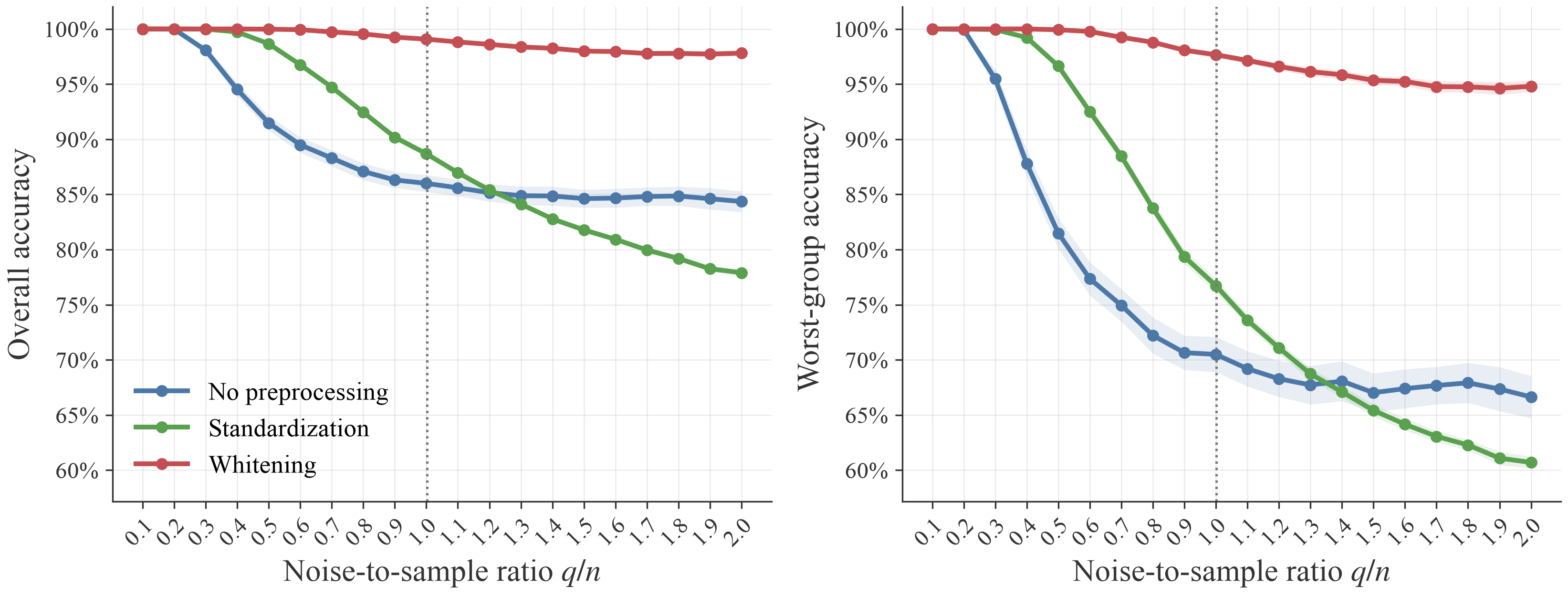}
    \caption{OOD test accuracy as a function of the noise-to-sample ratio $q/n$ for $n=1000$. We compare ERM with three preprocessing choices: no preprocessing, standardization, and whitening, in terms of overall accuracy (left) and worst-group accuracy (right), where groups are defined by the pair $(y, a)$. Points represent means over 100 simulations, shaded bands are 95\% confidence intervals.}
    \label{fig:synthetic-perf}
\end{figure}

\begin{theorem}[Whitened max-margin limits]
\label{thm:whitened-max-margin-limits}
Consider the synthetic DGP described in \eqref{eq:synthetic-dgp} and \eqref{eq:synthetic-dgp-2} with
$\gamma>0$, $\sigma_y^2=\sigma_a^2:=\sigma^2>0$,
$\sigma_\epsilon>0$, and $0<\rho_{\mathrm{tr}}<1$. Suppose that
the training labels are exactly balanced, $\sum_{i=1}^n y_i=0$. Let
$q,n\to\infty$ with $q/n\to\psi>1$. Define
\begin{align}
    \zeta:=1+\sigma^2,
    \qquad
    \tau^2
    :=
    \frac{\sigma^2(\zeta-\rho_{\mathrm{tr}}^2)}
         {\zeta^2-\rho_{\mathrm{tr}}^2}.
\end{align}
Then the max-margin classifier fitted on empirically whitened data, written
in the original coordinates as
$   \hat{\boldw}_{\mathrm{mm},\mathrm{whiten}}
    =
    \begin{pmatrix}
        \hat w_{\mathrm{whiten},c},
        \hat w_{\mathrm{whiten},s},
        \hat{\boldw}_{\mathrm{whiten},\epsilon}
    \end{pmatrix}^{\top},$
satisfies
\begin{align}
    \hat w_{\mathrm{whiten},c}
    \xrightarrow{p}
    \frac{\zeta-\rho_{\mathrm{tr}}^2}
    {\zeta^2-\rho_{\mathrm{tr}}^2},
    \quad 
    \hat w_{\mathrm{whiten},s}
    \xrightarrow{p}
    \frac{\rho_{\mathrm{tr}}\sigma^2}
    {\gamma(\zeta^2-\rho_{\mathrm{tr}}^2)}, \quad \frac1n
    \left\|
        \hat{\boldw}_{\mathrm{whiten},\epsilon}
    \right\|_2^2
    \xrightarrow{p}
    \frac{\psi}{\sigma_\epsilon^2(\psi-1)}\tau^2,
\end{align}
where $\xrightarrow{p}$ denotes convergence in
probability.
\end{theorem}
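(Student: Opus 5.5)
The plan is to first show that, for $q>n$, the whitened max-margin classifier coincides with the minimum-$\ell_2$-norm interpolator of the labels in the original coordinates. The limits then follow from a low-dimensional profiled least-squares problem combined with a random-matrix computation for the noise block.

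\textbf{Step 1: reduction to an interpolator.} Write $\boldsymbol{X}\in\mathbb{R}^{n\times d}$ for the centered design and $\boldsymbol{Z}=\boldsymbol{X}\boldU(\hat{\boldLambda}^\dagger)^{1/2}$ for the whitened design. Since $d-2=q>n$, the centered data almost surely span the full $(n-1)$-dimensional space $\1^\perp$, so $r=n-1$. Then
\begin{align}
\boldsymbol{Z}\boldsymbol{Z}^\top=(n-1)\,\boldsymbol{P},
\qquad \boldsymbol{P}:=\boldsymbol{I}-\tfrac1n\1\1^\top ,
\end{align}
because $\boldsymbol{Z}^\top\boldsymbol{Z}=(n-1)\boldsymbol{I}_r$ by construction. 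Exact balance gives $\boldsymbol{y}\in\1^\perp$, so the system $\boldsymbol{Z}\boldv=\boldsymbol{y}$ is solvable. As in the uniform-margin discussion of Section~\ref{sec:relations}, the solution of \eqref{eq:mm-whitened-data} has all margins equal to one. Indeed, the minimum-norm solution $\boldv=\boldsymbol{Z}^\top\boldsymbol{y}/(n-1)$ satisfies the KKT conditions with multipliers $\propto y_i\cdot(\text{positive})$.

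Mapping back, $\hat{\boldw}_{\mathrm{mm},\mathrm{whiten}}=\boldU(\hat{\boldLambda}^\dagger)^{1/2}\boldv=\hat{\boldSigma}^\dagger\boldsymbol{X}^\top\boldsymbol{y}/(n-1)=\boldsymbol{X}^\dagger\boldsymbol{y}$. This is the minimum-norm solution of $\boldsymbol{X}\boldw=\boldsymbol{y}$ in the original coordinates.

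\textbf{Step 2: profiling out the noise block.} Write $\boldsymbol{E}$ for the centered $n\times q$ noise block and $\boldsymbol{x}_c,\boldsymbol{x}_s$ for the centered feature columns. For fixed $(w_c,w_s)$, define the residual $\boldsymbol{r}(w_c,w_s)=\boldsymbol{y}-w_c\boldsymbol{x}_c-w_s\boldsymbol{x}_s\in\1^\perp$. The smallest $\|\boldw_\epsilon\|^2$ achieving $\boldsymbol{E}\boldw_\epsilon=\boldsymbol{r}$ is $\boldsymbol{r}^\top(\boldsymbol{E}\boldsymbol{E}^\top)^\dagger\boldsymbol{r}$. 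Hence $(\hat w_c,\hat w_s)$ minimizes
\begin{align}
F_n(w_c,w_s)=w_c^2+w_s^2+\boldsymbol{r}^\top(\boldsymbol{E}\boldsymbol{E}^\top)^\dagger\boldsymbol{r}.
\end{align}
The last term is a quadratic form built from the three vectors $\boldsymbol{y},\boldsymbol{x}_c,\boldsymbol{x}_s$, which are independent of $\boldsymbol{E}$. So it suffices to show that
\begin{align}
\tfrac1n\,\boldsymbol{a}^\top(\boldsymbol{E}\boldsymbol{E}^\top)^\dagger\boldsymbol{b}-\tfrac{\psi}{\sigma_\epsilon^2(\psi-1)}\tfrac1n\,\boldsymbol{a}^\top\boldsymbol{b}\xrightarrow{p}0
\qquad\text{for } \boldsymbol{a},\boldsymbol{b}\in\{\boldsymbol{y},\boldsymbol{x}_c,\boldsymbol{x}_s\}.
\end{align}
For this, write $\boldsymbol{E}=(\sigma_\epsilon/\sqrt q)\,\boldsymbol{P}\boldsymbol{G}$ with $\boldsymbol{G}$ standard Gaussian. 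Restricted to $\1^\perp$, the matrix $\boldsymbol{G}\boldsymbol{G}^\top$ is an $(n-1)$-dimensional Wishart matrix with $q$ degrees of freedom. Rotational invariance gives $\mathbb{E}[(\cdot)^{-1}]=\boldsymbol{I}/(q-n)$, so the mean is $\tfrac{q}{\sigma_\epsilon^2(q-n)}\boldsymbol{a}^\top\boldsymbol{b}$. Concentration of quadratic forms in Haar-rotated vectors, with the variance of order $1/n$ after the $1/n$ scaling because $\psi>1$, gives convergence in probability.

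\textbf{Step 3: passing to the limit.} The problem is now $\min F_n$ with $F_n/n\to C\,\mathbb{E}[(y-w_cx_c-w_s\gamma x_a)^2]$, where $C=\psi/(\sigma_\epsilon^2(\psi-1))$; the terms $w_c^2+w_s^2$ are $O(1/n)$ relative to the rest. The law of large numbers gives the population moments
\begin{align}
\mathbb{E}x_c^2=\mathbb{E}x_a^2=\zeta,\quad
\mathbb{E}x_cx_a=\rho_{\mathrm{tr}},\quad
\mathbb{E}yx_c=1,\quad
\mathbb{E}yx_a=\rho_{\mathrm{tr}}.
\end{align}
The criteria are convex quadratics in two variables converging uniformly on compacts, with a strictly convex limit, so the argmins converge. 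Solving $\begin{pmatrix}\zeta&\rho_{\mathrm{tr}}\\ \rho_{\mathrm{tr}}&\zeta\end{pmatrix}\boldb=\begin{pmatrix}1\\ \rho_{\mathrm{tr}}\end{pmatrix}$ yields the stated $\hat w_c$ and $\hat w_s=b_a/\gamma$. The minimal value gives $\tfrac1n\|\hat\boldw_\epsilon\|^2\to C\,(1-b_c-\rho_{\mathrm{tr}}b_a)$, and one checks algebraically that $1-b_c-\rho_{\mathrm{tr}}b_a=\tau^2$.

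\textbf{Main obstacle.} I expect the hardest step to be the joint convergence of the three-by-three matrix of quadratic forms in Step 2. Two issues need care there: the $\boldsymbol{P}$-projection, i.e.\ working with pseudo-inverses on $\1^\perp$ and replacing $n$ by $n-1$, and showing the fluctuations vanish uniformly over $(w_c,w_s)$. The uniformity should follow because $F_n/n$ is a quadratic whose coefficients converge. For the fluctuation bound I would first try the variance formula for inverse-Wishart quadratic forms.
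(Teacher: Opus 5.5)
Your proposal is correct and follows essentially the same route as the paper. Both arguments reduce the whitened max-margin solution to the minimum-norm interpolator $\boldsymbol X^\dagger\boldsymbol y$. Both then isolate a $2\times 2$ problem in $\boldsymbol C^\top\boldsymbol A^\dagger\boldsymbol C$ and $\boldsymbol C^\top\boldsymbol A^\dagger\boldsymbol y$: your profiled objective $F_n$ is minimized by exactly the paper's Woodbury formula $(\bm I_2+\boldsymbol C^\top\boldsymbol A^\dagger\boldsymbol C)^{-1}\boldsymbol C^\top\boldsymbol A^\dagger\boldsymbol y$. Both apply inverse-Wishart quadratic-form limits with constant $\psi/(\sigma_\epsilon^2(\psi-1))$ and read off the noise norm as $\boldsymbol r^\top\boldsymbol A^\dagger\boldsymbol r$.

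The differences are only technical. You use KKT in Step 1 where the paper uses an averaged-margin relaxation, and mean-plus-variance bounds where the paper uses the exact $\chi^2_{q-n+2}$ law. Your explicit treatment of the bilinear cross terms is, if anything, slightly more careful than the paper's.
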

The proof is given in Appendix~\ref{app:white-proofs}. Whitening removes sensitivity to the
scales of the spurious features. If the scale of the spurious feature $\gamma$ increases, the coefficient becomes correspondingly smaller.
Likewise, the norm of the noise coefficients decreases as $\sigma_\epsilon$ increases.

One noteworthy effect of whitening is that when $\psi \rightarrow 1$, certain noise coordinates are severely inflated by whitening and subsequently used for classification.
This is the same issue as discussed in Section~\ref{sec:est} when not using the nonlinear shrinkage estimator.
    
In this proof, we have used that $\psi > 1$ ($q>n$). The central idea that
whitening leads to a classifier that primarily relies on the core feature also holds in an underparameterized
special case with $d=q+2<n$. In Appendix~\ref{app:whitening-uniform-margin},
we lay out how Theorem~2 of
\citet{puli_2023} can be used to show that whitening removes the simplicity bias in this case, under a special case of the synthetic DGP.

\section{Empirical evaluation}
\label{sec:experiments}

This section investigates whether whitening representations before fitting a linear
probe improves robustness to spurious correlations in practice. We consider three commonly used benchmarks:
Waterbirds \citep{sagawa2020distributionallyrobustneuralnetworks}, CelebA \citep{liu2015faceattributes},
and MultiNLI \citep{Bowman_MultiNLI}. Each benchmark contains a classification task in which the
target label is spuriously correlated with an additional attribute. For Waterbirds, the target is
bird type and the spurious attribute is the background; for CelebA, the target is hair color and the spurious attribute is gender; and for MultiNLI, the target is the relation between the premise and hypothesis and the spurious attribute is the presence of a negation word in the hypothesis.
The spurious correlation is present in both the training and validation sets. Details on each dataset can be found in Appendix~\ref{app:datasets}.

We use two widely used pretrained backbones for the benchmarks. For Waterbirds and CelebA,
we fine-tune an ImageNet-pretrained ResNet-50 \citep{He2016DeepResidual}, and a BERT model for MultiNLI. We then extract the final-layer representation and fit an $\ell_2$-regularized
logistic regression model as the linear probe. Results for more recent architectures are reported in Appendix~\ref{app:additional-empirical-results}.
Based on \citet{idrissi2022simpledatabalancingachieves} and \citet{labonte2023lastlayerretraininggrouprobustness}, 
we use class-balancing when fitting the logistic regression and selecting the $\ell_2$ penalty.
Details on the evaluation protocol can be found in Appendix~\ref{app:protocol}.

\textbf{Evaluation metrics:} We focus on worst-group accuracy. Groups are defined by the combination of the target label and the spurious attribute (e.g., bird type and background).
Worst-group accuracy is commonly used as a metric for robustness to spurious correlations \citep{sagawa2020distributionallyrobustneuralnetworks, pmlr-v119-sagawa20a},
as it captures performance on small groups where the attribute and target label conflict. We also report equal-group accuracy, an equally weighted average of accuracies per group. This indicates if improvements in worst-group accuracy come with a decrease in accuracy for other groups.

\textbf{Improving linear probes:} Figure~\ref{fig:empirical-main} reports worst-group and equal-group accuracy for a linear probe with three preprocessing variants:
no preprocessing, standardization, and whitening. For the two image benchmarks, we find whitening significantly improves both worst-group and equal-group accuracy over no preprocessing and standardization,
whereas for MultiNLI, the difference is statistically insignificant. The improvement from whitening can be substantial, for instance
providing a 13.5 percentage point (pp) increase in worst-group accuracy on Waterbirds compared to no preprocessing, and an even greater improvement (23.6 pp) when compared to standardization. 

\begin{figure}[h]
    \centering
    \includegraphics[width=\textwidth]{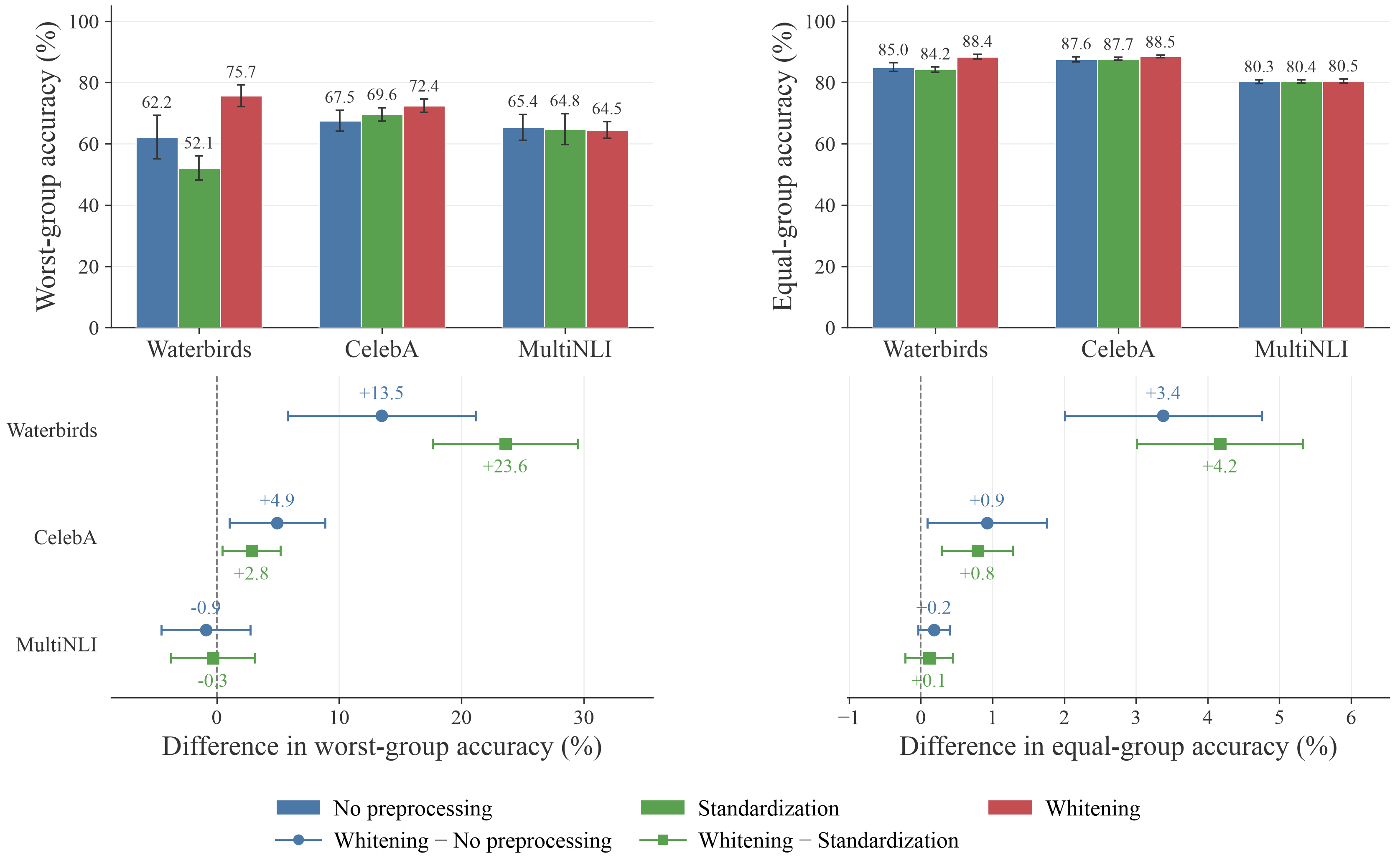}
    \caption{Worst-group (left) and equal-group (right) test accuracy for ERM with three preprocessing choices: no preprocessing, standardization, and whitening (top), together with mean within-seed paired differences for whitening minus each reference (bottom). Bars and points represent averages across seeds (ten for ResNet-50, five for BERT), and error bars are 95\% confidence intervals.}
    \label{fig:empirical-main}
\end{figure}

In order to investigate why whitening helps, we consider the eigenvalues of the empirical covariance matrix and the simplicity of the resulting classifiers (measured via the Rayleigh quotient, see Definition~\ref{def:simplicity}) of different classifiers in Figure~\ref{fig:empirical-spectral-diagnostics}.
For each representation, the empirical covariance matrix has a highly non-uniform set of eigenvalues. Figure~\ref{fig:empirical-spectral-diagnostics} also
shows that the classifier found after using ERM without preprocessing aligns with the largest eigenvalues, in line with the simplicity bias described in Section~\ref{sec:theory}.
The classifier found after whitening aligns with much smaller eigenvalues.
Thus, whitening appears to help because it reduces the tendency to rely on simple features.
However, it is not guaranteed that these simple features align with spurious features. 
This can explain the weaker result for MultiNLI: after whitening, the classifier relies on features that are less simple, with a similar worst-group accuracy.
Negation words are perhaps not `simple' and might be needed to classify entailment \citep{joshi-etal-2022-spurious}.
We also observe that standardization reduces simplicity but not to the same extent as whitening (likely for the reasons outlined in Section~\ref{sec:relations}), explaining why whitening outperforms standardization.

\begin{figure}[h]
    \centering
    \includegraphics[width=\textwidth]{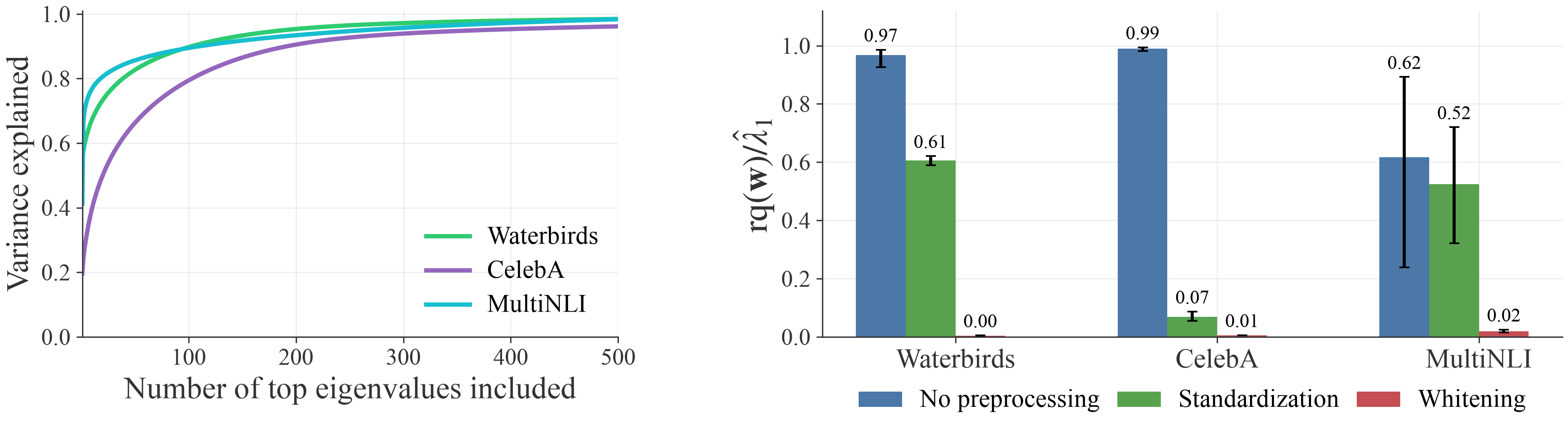}
    \caption{Variance explained by the top $k$ eigenvalues, up to $k=500$ (left), and the (normalized) Rayleigh quotient for different classifiers (right). 
    We take the classifier $\boldw$ after ERM with different preprocessing procedures and convert the classifier to the original coordinates, before calculating the Rayleigh quotient.
    Points and bars represent the average across seeds (ten for ResNet-50 and five for BERT), and error bars the 95\% confidence intervals.}
    \label{fig:empirical-spectral-diagnostics}
\end{figure}

\begin{figure}[h]
    \centering
    \includegraphics[width=\textwidth]{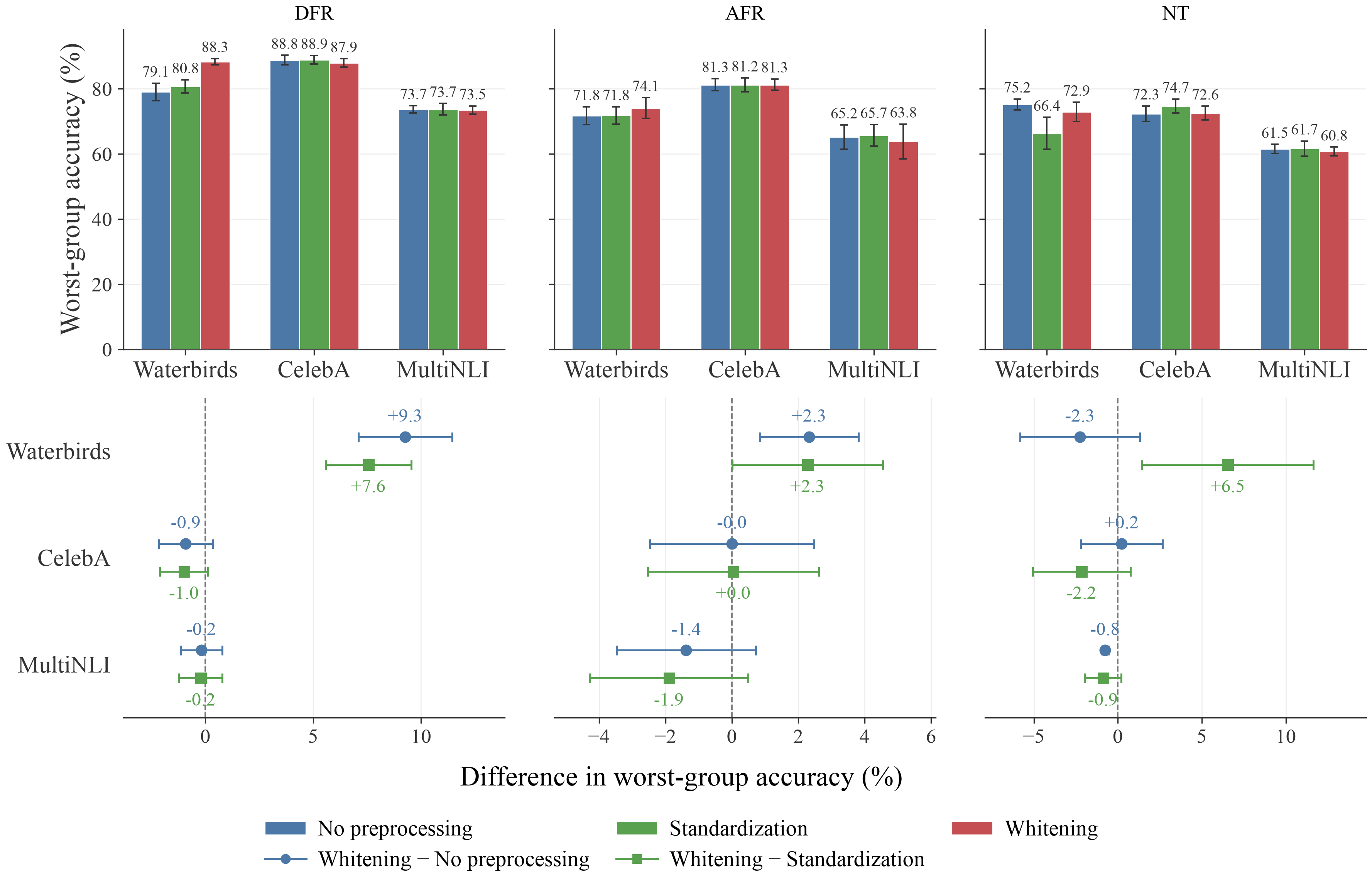}
    \caption{Worst-group test accuracy for DFR, AFR and NT for three preprocessing variants: no preprocessing, standardization, and whitening (top), together with mean within-seed paired differences for whitening minus each reference (bottom). Bars and points represent averages across seeds (ten for ResNet-50, five for BERT), and error bars are 95\% confidence intervals.}
    \label{fig:empirical-comparison}
\end{figure}

\textbf{Combining whitening and existing approaches:} we consider three state-of-the-art approaches that use linear probes in order to improve robustness to spurious correlations.
First, Deep Feature Reweighting (DFR) creates a group-balanced subsampled dataset based on the validation set, and fits a linear probe on it \citep{kirichenko2023layerretrainingsufficientrobustness}.
Second, Automatic Feature Reweighting (AFR) also fits a linear probe on a validation set, but reweights data points according to the predictions of a DNN trained with ERM \citep{qiu2023simplefastgrouprobustness}. 
Third, NeuronTune (NT) selects a subset of coordinates with high values when a model misclassifies a data point from the validation set \citep{pmlr-v267-zheng25s}, and removes these before fitting a linear probe.
Each approach uses different assumptions about the extent to which group labels are available: the full validation set for DFR, part of it for AFR, and none for NT.
For each, we assess the effect of using whitening before fitting the linear probe. Details on each approach are given in Appendix~\ref{app:other-approaches}.

Figure~\ref{fig:empirical-comparison} shows the worst-group accuracy for these three approaches without any preprocessing, standardization, and whitening.
Figure~\ref{fig:empirical-comparison-equal-group} in Appendix~\ref{app:additional-empirical-results} provides the equivalent comparison for equal-group accuracy.
Using whitening as a data-preprocessing step improves the worst-group accuracy for DFR and AFR for Waterbirds compared to doing no preprocessing or standardization, as well as for NeuronTune when compared to standardization.
For other datasets, the differences from using whitening are not statistically significant, with the exception of NT for MultiNLI, where whitening decreases the worst-group accuracy by a small amount (0.8 pp).

We attribute this to the fact that for other datasets, existing methods already reduce the simplicity of the resulting classifiers to a large extent. This can be seen in Figure~\ref{fig:simplicity-comparison}: 
for DFR, where we observe the largest improvement after whitening, we observe the largest decrease in simplicity of the resulting classifier.
In general, for the other datasets and approaches, the simplicity is already substantially lower before any whitening (compared to ERM without any preprocessing, as can be seen in Figure~\ref{fig:empirical-spectral-diagnostics}).

\begin{figure}[h]
    \centering
    \includegraphics[width=\textwidth]{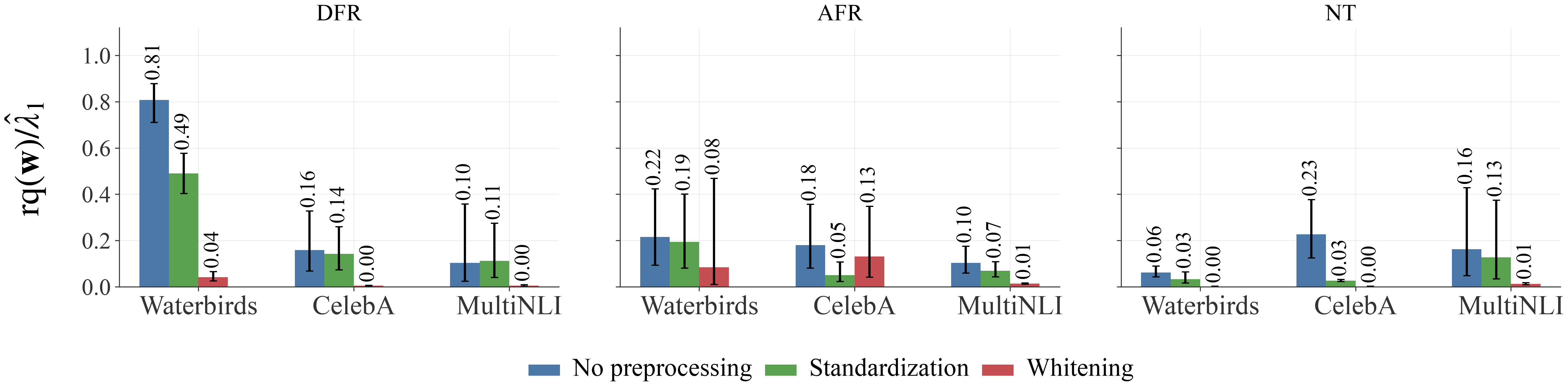}
    \caption{Normalized Rayleigh quotient for Deep Feature Reweighting (DFR), Automatic Feature Reweighting (AFR) and NeuronTune (NT) for three preprocessing variants: no preprocessing, standardization, and whitening. Points and bars represent the average across seeds (ten for ResNet-50 and five for BERT), and error bars are 95\% confidence intervals.}
    \label{fig:simplicity-comparison}
\end{figure}

\vspace{-0.5cm}
\section{Discussion and Conclusion}
\label{sec:conclusion}

In this paper, we show that the max-margin classifier exhibits a simplicity bias toward features associated with large eigenvalues of the empirical covariance matrix, and that whitening removes this bias.
On spurious correlation benchmarks, whitening can improve worst-group robustness, also when combined with existing methods that rely on linear probes.
Notably, in our results whitening only worsens the linear probe in a comparison. In addition, whitening requires neither pre-existing knowledge of the spurious correlation nor additional labels, and is computationally light.
As such, we argue that whitening as a preprocessing step should be strongly considered when addressing spurious correlations in linear probes.

A limitation of our approach is that it presumes simple features (with large eigenvalues) to be spurious.
While this has been argued in previous work (see Section~\ref{sec:related}), there are also cases where this might not hold, as we observe for MultiNLI.
In fact, it is sometimes desirable to encourage a classifier to make use of features with high eigenvalues \citep{dragutinović2026useusemuonsimplicity}. 
Whitening also does not address other reasons that linear probes rely on spurious features (see \citep{nagarajan2024understandingfailuremodesoutofdistribution}).

More broadly, our work suggests that modifying representation geometry is a simple way to reduce reliance on spurious features.
While we have shown the potential of whitening, we also find cases (especially when it is combined with existing approaches) where it does not lead to an improvement.
An interesting research direction to investigate is which geometry is `optimal' in terms of robustness against spurious correlations,
in the spirit of work characterizing desirable properties of representations (e.g. \citet{wang_isola_2020, balestriero2025lejepaprovablescalableselfsupervised}).

\clearpage

\subsection*{AI use statement}

In this work, we used generative AI tools to help with formulating mathematical claims, improving the writing of proofs, as well as suggesting strategies for proving a mathematical claim. In addition, we used generative AI tools to provide feedback on experiments as well as feedback on the implementation of other methods in the literature.
We have not used generative AI for developing the core argument laid out in the paper, to propose or refine hypotheses, or interpret results.
Finally, several other tasks that require disclosure - generating synthetic datasets, developing theoretical models or conceptual frameworks,  translation, dataset cleaning or reformatting, and qualitative or thematic data analysis - are not applicable to this work. 
Additionally, we used generative AI tools to create and edit code used for the experiments. We also used it to improve figures, the readability of the text, as well as finding spelling errors. We have reviewed all AI-assisted work.
Code written by generative AI was verified and tested for correctness by the first author. All four authors read the text of the paper, checking any suggestions provided by generative AI. Finally, any strategies suggested by generative AI for the proofs were verified by the first author.
We take responsibility for the final content of this work, including text, claims or artifacts produced with the aid of generative AI.

\subsection*{Ethics statement}

As with any technique that aims to improve the robustness of machine learning (ML) models to spurious correlations, practitioners should exercise caution when applying whitening in real-world settings where model predictions affect people’s lives. 
Reducing reliance on spurious correlations can improve the performance of an ML model for certain subgroups. 
However, our work focuses on a specific setting (a generalized linear model fitted on representations generated by DNNs) and evaluates the whitening on a limited number of benchmark datasets. 
These benchmarks do not capture all sources of bias and other considerations that may arise in deployment, such as unintended effects on other subgroups.
Moreover, our theoretical analysis relies on specific assumptions. The conclusions therefore need not extend directly to nonlinear models, or other distribution shifts.
Consequently, practitioners should evaluate whitening in the context in which it is to be deployed, and remain alert to potential unintended consequences.

\subsection*{Reproducibility statement}

The code used to generate the results in this paper will be made available after the review process in the form of a repository. We have attached it as a zip file to our submission.

\appendix

\newpage
\section{Proofs}
\label{app:proofs}

\subsection{Proof of Theorem~\ref{thm:whitened-max-margin-limits}}
\label{app:white-proofs}
\begin{proof}
The proof proceeds in two steps. We first show that the max-margin classifier on whitened data permits an analytical solution.
We then show the limits of this solution in the original coordinate system.
Throughout the proof,
the probability limits are taken as
\begin{align}
    n\to\infty,\qquad
    q\to\infty,\qquad
    q/n\to\psi>1.
\end{align}
Here, $\xrightarrow{p}$ denotes convergence in probability, under these limits. All
probabilities and expectations are conditional on $\sum_i y_i=0$.

As a preliminary step, we first introduce some notation and definitions.
We write the centered design matrix as
\begin{align}
    \boldsymbol X=(\boldsymbol C\ \boldsymbol E),
\end{align}
where $\boldsymbol C\in\mathbb R^{n\times 2}$ contains the sample-centered
core and spurious features and $\boldsymbol E\in\mathbb R^{n\times q}$ is
the centered noise block. Let $\boldsymbol y=(y_1,\ldots,y_n)^\top$ be the
corresponding labels. If
$\boldsymbol{1}:=(1,\ldots,1)^\top\in\mathbb R^n$ denotes the all-ones vector,
then centering places every column of $\boldsymbol X$ in
$\boldsymbol{1}^{\perp}:=\{\boldsymbol{u}\in\mathbb R^n:
\boldsymbol{1}^\top\boldsymbol{u}=0\}$.

Let
\[
\boldsymbol H
:=
\bm I_n-\frac1n\boldsymbol 1\boldsymbol 1^\top
\]
denote the centering matrix. Since the centered Gaussian noise block
$\boldsymbol E$ has $q$ columns supported on the $(n-1)$-dimensional space
$\boldsymbol 1^\perp$, and $q\ge n-1$ for all sufficiently large $n$,
\[
\operatorname{rank}(\boldsymbol E)=n-1
\]
almost surely. Consequently,
\[
\operatorname{col}(\boldsymbol E)
=
\operatorname{col}(\boldsymbol X)
=
\boldsymbol 1^\perp
\]
almost surely.

We start the first step by establishing the max-margin classifier on the
whitened data. Define
\begin{align}
    \widehat{\boldsymbol\Sigma}
    :=
    \frac1{n-1}\boldsymbol X^\top\boldsymbol X
    = \boldU_r\hat{\boldLambda}_r\boldU_r^\top,
\end{align}
where $r = n-1$ almost surely. Empirical whitening applies the map
$\boldU_r\hat{\boldLambda}_r^{-1/2}$ to the centered data, which gives
\begin{align}
    \boldsymbol Z
    :=
    \boldsymbol X\boldU_r\hat{\boldLambda}_r^{-1/2}.
\end{align}

When the data is whitened, the max-margin classifier is given by
\begin{align}
    \hat{\boldv}_{\mathrm{whiten}}
    =
    \arg\min_{\boldv}
    \|\boldv\|_2^2
    \quad
    \text{s.t.}
    \quad
    y_i\boldz_i^\top\boldv\ge 1,\quad i=1,\ldots,n, \label{eq:mm_full}
\end{align}
where $\boldz_i^\top$ is the $i$-th row of $\boldsymbol Z$. The
corresponding coefficient vector in the original coordinates is
\begin{align}
   \hat{\boldw}_{\mathrm{mm}, \mathrm{whiten}}
    =
    \boldU_r\hat{\boldLambda}_r^{-1/2}\hat{\boldv}_{\mathrm{whiten}}.
\end{align}
Consider the vector
\begin{align}
    \boldv_0
    :=
    \frac{n^{-1}\boldsymbol Z^\top\boldsymbol y}
    {\|n^{-1}\boldsymbol Z^\top\boldsymbol y\|_2^2}.
\label{eq:v0-definition}
\end{align}
We will show that $\boldv_0$ solves \eqref{eq:mm_full} by comparing the
problem with
\begin{align}
    \min_{\boldv}\|\boldv\|_2^2
    \quad\text{s.t.}\quad
    \left(
        \frac1n\boldsymbol Z^\top\boldsymbol y
    \right)^\top\boldv\geq1.
\label{eq:mm_limited}
\end{align}

Let
\begin{align}
    \mathcal F_{\mathrm{full}}
    &:={}
    \{\boldv:y_i\boldz_i^\top\boldv\geq1,\ i=1,\ldots,n\},
    \\
    \mathcal F_{\mathrm{avg}}
    &:={}
    \left\{
        \boldv:
        \left(
            \frac1n\boldsymbol Z^\top\boldsymbol y
        \right)^\top\boldv\geq1
    \right\}.
\end{align}
It is enough to verify the following two conditions:
\begin{enumerate}
    \item $\mathcal F_{\mathrm{full}}\subseteq\mathcal F_{\mathrm{avg}}$;
    \item $\boldv_0$ minimizes $\mathcal F_{\mathrm{avg}}$ and belongs to $\mathcal F_{\mathrm{full}}$.
\end{enumerate}
If these conditions hold, then
$\hat{\boldv}_{\mathrm{whiten}}=\boldv_0$.

We first verify condition 1. If $\boldv\in\mathcal F_{\mathrm{full}}$, then
$y_i\boldz_i^\top\boldv\geq1$ for every $i$. Since the $i$-th row of
$\boldsymbol Z$ is $\boldz_i^\top$, averaging these inequalities gives
\begin{align}
    \left(n^{-1}\boldsymbol Z^\top\boldsymbol y\right)^\top\boldv
    &= \frac1n\boldsymbol y^\top\boldsymbol Z\boldv \\
    &= \frac1n\sum_{i=1}^n y_i(\boldsymbol Z\boldv)_i \\
    &= \frac1n\sum_{i=1}^n y_i\boldz_i^\top\boldv \\
    &\geq \frac1n\sum_{i=1}^n 1 \\
    &= 1.
\end{align}
Thus $\boldv\in\mathcal F_{\mathrm{avg}}$, and hence
$\mathcal F_{\mathrm{full}}\subseteq\mathcal F_{\mathrm{avg}}$.

We next verify condition 2. For every $\boldv\in\mathcal F_{\mathrm{avg}}$,
the averaged-margin constraint and the Cauchy--Schwarz inequality imply
\begin{align}
    1&\leq \left(n^{-1}\boldsymbol Z^\top\boldsymbol y\right)^\top\boldv \\
    &\leq \left|\left(n^{-1}\boldsymbol Z^\top\boldsymbol y\right)^\top
       \boldv\right| \\
    &\leq \left\|n^{-1}\boldsymbol Z^\top\boldsymbol y\right\|_2
       \|\boldv\|_2.
\end{align}
By exact label balance, $\boldsymbol y\in\boldsymbol 1^\perp$. Since the
columns of $\boldsymbol Z$ span $\boldsymbol 1^\perp$ and
$\boldsymbol y\neq\boldsymbol 0$, we have
$\boldsymbol Z^\top\boldsymbol y\neq\boldsymbol 0$. Hence we may divide by
$\|n^{-1}\boldsymbol Z^\top\boldsymbol y\|_2$ to obtain
\begin{align}
    \|\boldv\|_2
    &\geq \frac{1}{\|n^{-1}\boldsymbol Z^\top\boldsymbol y\|_2} \\
    &= \frac{\|n^{-1}\boldsymbol Z^\top\boldsymbol y\|_2}
            {\|n^{-1}\boldsymbol Z^\top\boldsymbol y\|_2^2} \\
    &= \left\|\frac{n^{-1}\boldsymbol Z^\top\boldsymbol y}
            {\|n^{-1}\boldsymbol Z^\top\boldsymbol y\|_2^2}\right\|_2 \\
    &= \|\boldv_0\|_2.
\end{align}
Moreover, substituting $\boldv_0$ from \eqref{eq:v0-definition} into the
averaged-margin constraint gives
\begin{align}
    \left(n^{-1}\boldsymbol Z^\top\boldsymbol y\right)^\top\boldv_0
    &= \frac{\left(n^{-1}\boldsymbol Z^\top\boldsymbol y\right)^\top
             \left(n^{-1}\boldsymbol Z^\top\boldsymbol y\right)}
            {\|n^{-1}\boldsymbol Z^\top\boldsymbol y\|_2^2} = 1.
\end{align}
Thus $\boldv_0\in\mathcal F_{\mathrm{avg}}$ and attains the lower bound,
so it minimizes the squared norm over $\mathcal F_{\mathrm{avg}}$.

It remains to show that $\boldv_0$ is feasible for the full problem.
From the preliminary result above,
$\operatorname{col}(\boldsymbol X)=\boldsymbol 1^\perp$. Hence
$\boldsymbol X\boldsymbol X^\dagger$ is the orthogonal projector onto
$\boldsymbol 1^\perp$, namely $
\boldsymbol X\boldsymbol X^\dagger=\boldsymbol H
$. Using the definition of $\boldsymbol Z$ and
$\widehat{\boldsymbol\Sigma}=(n-1)^{-1}\boldsymbol X^\top\boldsymbol X$,
we obtain
\begin{align}
    \boldsymbol Z\boldsymbol Z^\top
    &= \boldsymbol X\boldU_r\hat{\boldLambda}_r^{-1}
       \boldU_r^\top\boldsymbol X^\top \\
    &= \boldsymbol X\widehat{\boldsymbol\Sigma}^\dagger
       \boldsymbol X^\top \\
    &= (n-1)\boldsymbol X
       (\boldsymbol X^\top\boldsymbol X)^\dagger\boldsymbol X^\top \\
    &= (n-1)\boldsymbol X\boldsymbol X^\dagger \\
    &= (n-1)\boldsymbol H.
\label{eq:white-gram}
\end{align}
Here we used the pseudoinverse identity
$(\boldsymbol X^\top\boldsymbol X)^\dagger\boldsymbol X^\top
=\boldsymbol X^\dagger$. Exact label balance gives
$\boldsymbol 1^\top\boldsymbol y=0$, and hence
\begin{align}
    \boldsymbol H\boldsymbol y
    &= \left(\bm I_n-\frac1n\boldsymbol 1\boldsymbol 1^\top\right)
       \boldsymbol y \\
    &= \boldsymbol y-\frac1n\boldsymbol 1
       (\boldsymbol 1^\top\boldsymbol y) \\
    &= \boldsymbol y.
\end{align}
Using \eqref{eq:white-gram}, $\boldsymbol H\boldsymbol y=\boldsymbol y$,
and $y_i^2=1$ for every $i$, we obtain
\begin{align}
    \left\|n^{-1}\boldsymbol Z^\top\boldsymbol y\right\|_2^2
    &= \frac1{n^2}\boldsymbol y^\top
       \boldsymbol Z\boldsymbol Z^\top\boldsymbol y \\
    &= \frac{n-1}{n^2}\boldsymbol y^\top\boldsymbol H\boldsymbol y \\
    &= \frac{n-1}{n^2}\boldsymbol y^\top\boldsymbol y \\
    &= \frac{n-1}{n^2}\sum_{i=1}^n y_i^2 \\
    &= \frac{n-1}{n}.
\end{align}
Substituting this value into \eqref{eq:v0-definition} gives
\begin{align}
    \boldv_0
    &= \frac{n^{-1}\boldsymbol Z^\top\boldsymbol y}{(n-1)/n} \\
    &= \frac{n}{n-1}\frac1n\boldsymbol Z^\top\boldsymbol y \\
    &= \frac1{n-1}\boldsymbol Z^\top\boldsymbol y.
\end{align}
Therefore,
\begin{align}
    \boldsymbol Z\boldv_0
    =
    \frac1{n-1}\boldsymbol Z\boldsymbol Z^\top\boldsymbol y
    =
    \frac1{n-1}(n-1)\boldsymbol H\boldsymbol y
    =
    \boldsymbol y.
\end{align}
Hence $y_i\boldz_i^\top\boldv_0=y_i^2=1$ for every $i$, so
$\boldv_0\in\mathcal F_{\mathrm{full}}$. The two conditions are therefore
satisfied, and consequently
\begin{align}
    \hat{\boldv}_{\mathrm{whiten}}
    =
    \boldv_0
    =
    \frac1{n-1}\boldsymbol Z^\top\boldsymbol y.
\end{align}
The corresponding original-coordinate coefficient vector is
\begin{align}
   \hat{\boldw}_{\mathrm{mm}, \mathrm{whiten}}
    =
    \boldU_r\hat{\boldLambda}_r^{-1/2}\hat{\boldv}_{\mathrm{whiten}}
    =
    \frac1{n-1}
    \widehat{\boldsymbol\Sigma}^\dagger
    \boldsymbol X^\top\boldsymbol y.
\label{eq:white-original-coef}
\end{align}

We now proceed with the second step of the proof, which is to take limits of \eqref{eq:white-original-coef}.
Let
\begin{align}
    \boldsymbol A:=\boldsymbol E\boldsymbol E^\top,
    \qquad
    \boldsymbol c:=
    \begin{pmatrix}x_c\\x_s\end{pmatrix}.
\end{align}
Here and below, $\mathbb E_{\mathrm{tr}}$ denotes expectation under the
training environment. Then
\begin{align}
    \boldsymbol\Sigma_{c,s}
    &:=
    \mathbb E_{\mathrm{tr}}[\boldsymbol c\boldsymbol c^\top]
    =
    \begin{pmatrix}
       1 + \sigma^2& \gamma\rho_{\mathrm{tr}}\\
        \gamma\rho_{\mathrm{tr}} & \gamma^2(1 + \sigma^2)
    \end{pmatrix},
    \qquad
    \boldsymbol\beta_{c,s}^{\mathrm{tr}}
    :=
    \mathbb E_{\mathrm{tr}}[y\boldsymbol c]
    =
    \begin{pmatrix}
        1\\
        \gamma\rho_{\mathrm{tr}}
    \end{pmatrix}.
\end{align}

Using \eqref{eq:white-original-coef}, we can write
\begin{align}
   \hat{\boldw}_{\mathrm{mm}, \mathrm{whiten}}
    =
    \boldsymbol X^\top
    \left(
        \boldsymbol X\boldsymbol X^\top
    \right)^\dagger
    \boldsymbol y .
\end{align}
Since
$\boldsymbol X\boldsymbol X^\top
=\boldsymbol A+\boldsymbol C\boldsymbol C^\top$,
the first two coordinates are given by
\begin{align}
    \begin{pmatrix}
        \hat w_{\mathrm{whiten},c}\\
        \hat w_{\mathrm{whiten},s}
    \end{pmatrix}
    =
    \boldsymbol C^\top
    \left(
        \boldsymbol A+
        \boldsymbol C\boldsymbol C^\top
    \right)^\dagger
    \boldsymbol y .
\label{eq:white-signal-direct}
\end{align}
Since $\operatorname{col}(\boldsymbol E)=\boldsymbol 1^\perp$,
$\boldsymbol A=\boldsymbol E\boldsymbol E^\top$ is invertible on
$\boldsymbol 1^\perp$. Moreover, $\boldsymbol C$ and $\boldsymbol y$ lie in
this same subspace. Applying the Woodbury identity on
$\boldsymbol 1^\perp$ therefore gives
\begin{align}
\left(
\boldsymbol A+
\boldsymbol C\boldsymbol C^\top
\right)^\dagger
&=
\boldsymbol A^\dagger
-
\boldsymbol A^\dagger
\boldsymbol C
\left(
\bm I_2+
\boldsymbol C^\top
\boldsymbol A^\dagger
\boldsymbol C
\right)^{-1}
\boldsymbol C^\top
\boldsymbol A^\dagger .
\label{eq:white-woodbury-gram}
\end{align}
Substituting \eqref{eq:white-woodbury-gram} into
\eqref{eq:white-signal-direct} and collecting terms gives
\begin{align}
    \begin{pmatrix}
        \hat w_{\mathrm{whiten},c}\\
        \hat w_{\mathrm{whiten},s}
    \end{pmatrix}
    &=
    \left[
        \bm I_2
        -
        \boldsymbol C^\top
        \boldsymbol A^\dagger
        \boldsymbol C
        \left(
            \bm I_2+
            \boldsymbol C^\top
            \boldsymbol A^\dagger
            \boldsymbol C
        \right)^{-1}
    \right]
    \boldsymbol C^\top
    \boldsymbol A^\dagger
    \boldsymbol y
    \\
    &=
    \left[
        \left(
            \bm I_2+
            \boldsymbol C^\top
            \boldsymbol A^\dagger
            \boldsymbol C
        \right)
        -
        \boldsymbol C^\top
        \boldsymbol A^\dagger
        \boldsymbol C
    \right]
    \left(
        \bm I_2+
        \boldsymbol C^\top
        \boldsymbol A^\dagger
        \boldsymbol C
    \right)^{-1}
    \boldsymbol C^\top
    \boldsymbol A^\dagger
    \boldsymbol y
    \\
    &=
    \left(
        \bm I_2+
        \boldsymbol C^\top
        \boldsymbol A^\dagger
        \boldsymbol C
    \right)^{-1}
    \boldsymbol C^\top
    \boldsymbol A^\dagger
    \boldsymbol y.
\label{eq:white-woodbury}
\end{align}
The centered noise block remains independent of
$(\boldsymbol C,\boldsymbol y)$. Its columns are Gaussian with covariance
$(\sigma_\epsilon^2/q)\boldsymbol H$. Define
\begin{align}
    \alpha_\psi := \frac{\psi}{\sigma_\epsilon^2(\psi-1)}.
\end{align}
Let $\boldu\in\boldsymbol 1^\perp$ be a nonzero vector.
Expressing $\boldsymbol A$ in an orthonormal basis of $\boldsymbol 1^\perp$
removes its zero direction $\boldsymbol 1$ and gives an invertible
$(n-1)\times(n-1)$ Wishart matrix with $q$ degrees of freedom and scale
$(\sigma_\epsilon^2/q)\bm I_{n-1}$.
For fixed $\boldu$, the standard
inverse-Wishart quadratic-form identity gives
\begin{align}
    \frac{q\|\boldu\|_2^2}
    {\sigma_\epsilon^2\boldu^\top\boldsymbol A^\dagger\boldu}
    \sim\chi^2_{q-n+2}.
\end{align}
If $Z_n\sim\chi^2_{q-n+2}$, its mean and variance imply
$Z_n/(q-n+2)\xrightarrow{p}1$. Hence
\begin{align}
    \frac{\boldu^\top\boldsymbol A^\dagger\boldu}{\|\boldu\|_2^2}
    &\overset{d}{=}
    \frac{q}{\sigma_\epsilon^2 Z_n}
    =\frac{1}{\sigma_\epsilon^2}
      \frac{q}{q-n+2}\frac{q-n+2}{Z_n}
    \xrightarrow{p}\alpha_\psi. \label{eq:limit-inverse-wishart}
\end{align}
Conditional on $(\boldsymbol C,\boldsymbol y)$, their entries are fixed,
while independence ensures that the noise block retains its original
Gaussian distribution. We can therefore apply the limit in \eqref{eq:limit-inverse-wishart} to each of the two-dimensional blocks of
$\boldsymbol C^\top\boldsymbol A^\dagger\boldsymbol C$ and
$\boldsymbol C^\top\boldsymbol A^\dagger\boldsymbol y$. This gives
\begin{align}
    \frac1n
    \boldsymbol C^\top
    \boldsymbol A^\dagger
    \boldsymbol C
    \xrightarrow{p}
    \alpha_\psi \boldsymbol\Sigma_{c,s},
    \qquad
    \frac1n
    \boldsymbol C^\top
    \boldsymbol A^\dagger
    \boldsymbol y
    \xrightarrow{p}
    \alpha_\psi \boldsymbol\beta_{c,s}^{\mathrm{tr}},
\label{eq:white-normalized-blocks-simple}
\end{align}
where we use that $\boldsymbol C^\top\boldsymbol C/n$ and
$\boldsymbol C^\top\boldsymbol y/n$ are sample averages converging to
$\boldsymbol\Sigma_{c,s}$ and $\boldsymbol\beta_{c,s}^{\mathrm{tr}}$,
respectively.
Rewriting \eqref{eq:white-woodbury} as
\begin{align}
    \begin{pmatrix}
        \hat w_{\mathrm{whiten},c}\\
        \hat w_{\mathrm{whiten},s}
    \end{pmatrix}
    =
    \left(
        \frac{\bm I_2}{n}
        +
        \frac1n
        \boldsymbol C^\top
        \boldsymbol A^\dagger
        \boldsymbol C
    \right)^{-1}
    \frac1n
    \boldsymbol C^\top
    \boldsymbol A^\dagger
    \boldsymbol y,
\end{align}
Slutsky's theorem and the continuous mapping theorem give
\begin{align}
    \begin{pmatrix}
        \hat w_{\mathrm{whiten},c}\\
        \hat w_{\mathrm{whiten},s}
    \end{pmatrix}
    \xrightarrow{p}
    \left(
        \alpha_\psi\boldsymbol\Sigma_{c,s}
    \right)^{-1}
    \left(
        \alpha_\psi\boldsymbol\beta_{c,s}^{\mathrm{tr}}
    \right)
    =
    \boldsymbol\Sigma_{c,s}^{-1}
    \boldsymbol\beta_{c,s}^{\mathrm{tr}}.
\label{eq:white-coef-limit}
\end{align}
Multiplying out, and writing $\zeta:=1+\sigma^2$, gives
\begin{align}
    \boldsymbol\Sigma_{c,s}^{-1}
    \boldsymbol\beta_{c,s}^{\mathrm{tr}}
    =
    \begin{pmatrix}
        \dfrac{\zeta-\rho_{\mathrm{tr}}^2}{\zeta^2-\rho_{\mathrm{tr}}^2}
        \\[1.2em]
        \dfrac{\rho_{\mathrm{tr}}\sigma^2}
        {\gamma(\zeta^2-\rho_{\mathrm{tr}}^2)}
    \end{pmatrix}.
\end{align}
This proves the claimed limits for the core and spurious coefficients.

It remains to track the norm of the noise coefficients. Let
\begin{align}
    \boldb_{c,s}
    &:=
    \boldsymbol\Sigma_{c,s}^{-1}
    \boldsymbol\beta_{c,s}^{\mathrm{tr}},
    \qquad
    \hat{\boldb}_{c,s}
    :=
    \begin{pmatrix}
        \hat w_{\mathrm{whiten},c}\\
        \hat w_{\mathrm{whiten},s}
    \end{pmatrix}.
\end{align}
Extracting the noise coordinates from \eqref{eq:white-original-coef} gives
\begin{align}
    \hat{\boldw}_{\mathrm{whiten},\epsilon}
    &=
    \boldsymbol E^\top
    \left(
        \boldsymbol A+
        \boldsymbol C\boldsymbol C^\top
    \right)^\dagger
    \boldsymbol y
    \\
    &=
    \boldsymbol E^\top
    \left[
        \boldsymbol A^\dagger
        -
        \boldsymbol A^\dagger
        \boldsymbol C
        \left(
            \bm I_2+
            \boldsymbol C^\top
            \boldsymbol A^\dagger
            \boldsymbol C
        \right)^{-1}
        \boldsymbol C^\top
        \boldsymbol A^\dagger
    \right]
    \boldsymbol y
    \\
    &=
    \boldsymbol E^\top
    \boldsymbol A^\dagger
    \left[
        \boldsymbol y
        -
        \boldsymbol C
        \left(
            \bm I_2+
            \boldsymbol C^\top
            \boldsymbol A^\dagger
            \boldsymbol C
        \right)^{-1}
        \boldsymbol C^\top
        \boldsymbol A^\dagger
        \boldsymbol y
    \right]
    \\
    &=
    \boldsymbol E^\top
    \boldsymbol A^\dagger
    \left(
        \boldsymbol y
        -
        \boldsymbol C\hat{\boldb}_{c,s}
    \right).
\label{eq:white-noise-woodbury}
\end{align}
The second equality applies \eqref{eq:white-woodbury-gram}, the third factors
out $\boldsymbol E^\top\boldsymbol A^\dagger$, and the final equality applies
\eqref{eq:white-woodbury}.
Define the residual vector
\begin{align}
    \boldsymbol r_n
    :=
    \boldsymbol y
    -
    \boldsymbol C\hat{\boldb}_{c,s}.
\end{align}
Since $\boldsymbol A=\boldsymbol E\boldsymbol E^\top$,
we have
\begin{align}
    \left\|
        \hat{\boldw}_{\mathrm{whiten},\epsilon}
    \right\|_2^2
    =
    \boldsymbol r_n^\top
    \boldsymbol A^\dagger
    \boldsymbol E
    \boldsymbol E^\top
    \boldsymbol A^\dagger
    \boldsymbol r_n
    =
    \boldsymbol r_n^\top
    \boldsymbol A^\dagger
    \boldsymbol r_n .
\end{align}
The last equality uses $\boldsymbol A=\boldsymbol E\boldsymbol E^\top$ and
the Moore--Penrose identity
$\boldsymbol A^\dagger\boldsymbol A\boldsymbol A^\dagger=\boldsymbol A^\dagger$.

To take the limit of this norm, we expand its quadratic form directly:
\begin{align}
    \frac1n
    \left\|
        \hat{\boldw}_{\mathrm{whiten},\epsilon}
    \right\|_2^2
    &=
    \frac1n
    \boldsymbol y^\top
    \boldsymbol A^\dagger
    \boldsymbol y
    -
    2\hat{\boldb}_{c,s}^\top
    \left(
        \frac1n
        \boldsymbol C^\top
        \boldsymbol A^\dagger
        \boldsymbol y
    \right)
    +
    \hat{\boldb}_{c,s}^\top
    \left(
        \frac1n
        \boldsymbol C^\top
        \boldsymbol A^\dagger
        \boldsymbol C
    \right)
    \hat{\boldb}_{c,s}.
\end{align}
Since $y^2_i = 1$, we have $\boldsymbol y^\top\boldsymbol y/n=1$. The same
inverse-Wishart quadratic-form limit used in
\eqref{eq:white-normalized-blocks-simple} therefore gives
\begin{align}
    \frac1n
    \boldsymbol y^\top
    \boldsymbol A^\dagger
    \boldsymbol y
    \xrightarrow{p}
    \alpha_\psi.
\end{align}
Combining this limit with \eqref{eq:white-normalized-blocks-simple} and
$\hat{\boldb}_{c,s}\xrightarrow{p}\boldb_{c,s}$ gives
\begin{align}
    \frac1n
    \left\|
        \hat{\boldw}_{\mathrm{whiten},\epsilon}
    \right\|_2^2
    \xrightarrow{p}
    \alpha_\psi
    \left(
        1
        -
        2\boldb_{c,s}^\top\boldsymbol\beta_{c,s}^{\mathrm{tr}}
        +
        \boldb_{c,s}^\top
        \boldsymbol\Sigma_{c,s}
        \boldb_{c,s}
    \right)
    =
    \alpha_\psi
    \left[
        1-
        \left(
            \boldsymbol\beta_{c,s}^{\mathrm{tr}}
        \right)^\top
        \boldsymbol\Sigma_{c,s}^{-1}
        \boldsymbol\beta_{c,s}^{\mathrm{tr}}
    \right].
\end{align}
The remaining term simplifies to
\begin{align}
    \left(
        \boldsymbol\beta_{c,s}^{\mathrm{tr}}
    \right)^\top
    \boldsymbol\Sigma_{c,s}^{-1}
    \boldsymbol\beta_{c,s}^{\mathrm{tr}}
    =
    \begin{pmatrix}
        1 & \gamma\rho_{\mathrm{tr}}
    \end{pmatrix}
    \begin{pmatrix}
        \dfrac{\zeta-\rho_{\mathrm{tr}}^2}
        {\zeta^2-\rho_{\mathrm{tr}}^2}
        \\[1em]
        \dfrac{\rho_{\mathrm{tr}}\sigma^2}
        {\gamma(\zeta^2-\rho_{\mathrm{tr}}^2)}
    \end{pmatrix}
    =
    \frac{
        \zeta-\rho_{\mathrm{tr}}^2
        +\rho_{\mathrm{tr}}^2\sigma^2
    }{
        \zeta^2-\rho_{\mathrm{tr}}^2
    }.
\end{align}
Hence, using $\sigma^2=\zeta-1$,
\begin{align}
    1-
    \left(
        \boldsymbol\beta_{c,s}^{\mathrm{tr}}
    \right)^\top
    \boldsymbol\Sigma_{c,s}^{-1}
    \boldsymbol\beta_{c,s}^{\mathrm{tr}}
    &=
    1-
    \frac{
        \zeta-\rho_{\mathrm{tr}}^2
        +\rho_{\mathrm{tr}}^2\sigma^2
    }{
        \zeta^2-\rho_{\mathrm{tr}}^2
    }
    \\
    &=
    \frac{
        \zeta^2-\zeta-\rho_{\mathrm{tr}}^2\sigma^2
    }{
        \zeta^2-\rho_{\mathrm{tr}}^2
    }
    \\
    &=
    \frac{
        (\zeta-1)(\zeta-\rho_{\mathrm{tr}}^2)
    }{
        \zeta^2-\rho_{\mathrm{tr}}^2
    }
    \\
    &=
    \frac{
        \sigma^2(\zeta-\rho_{\mathrm{tr}}^2)
    }{
        \zeta^2-\rho_{\mathrm{tr}}^2
    }
    =
    \tau^2.
\end{align}
Therefore,
\begin{align}
    \frac1n
    \left\|
        \hat{\boldw}_{\mathrm{whiten},\epsilon}
    \right\|_2^2
    \xrightarrow{p}
    \alpha_\psi \tau^2
    =
    \frac{\psi}{\sigma_\epsilon^2(\psi-1)}
    \tau^2.
\end{align}
This gives the stated limit for the norm of the noise coefficients and
completes the proof.
\end{proof}

\subsection{Whitening selects coefficients with uniform margins}
\label{app:whitening-uniform-margin}

We consider the uncentered version of the DGP in
Section~\ref{sec:synthetic-dgp}. In particular, let
\begin{align}
    \boldx_i
    =
    (y_i,\gamma a_i,\boldsymbol\epsilon_i),
    \qquad
    \boldsymbol\epsilon_i\sim\mathcal N(\boldsymbol 0,\bm I_q),
\end{align}
which corresponds to $\sigma_y^2=\sigma_a^2=0$ and
$\sigma_\epsilon^2=q$ in our parameterization.
We presume $0 < \rho_{\mathrm{tr}} < 1$.
We first state Theorem~2 of \citet{puli_2023} in our notation.

\begin{theorem}[Uniform margins; \citealp{puli_2023}, Theorem~2]
\label{thm:puli-uniform-margin}
Consider $n$ observations from the DGP above, with $\gamma>1$ and
$d=q+2<n$. Suppose that a linear model
$\boldw=(w_c,w_s,\boldw_\epsilon)$ has uniform positive margins: for some
$b>0$,
\begin{align}
    y_i\boldw^\top\boldx_i=b,
    \qquad i=1,\ldots,n.
\end{align}
Then, with probability one over the training sample,
\begin{align}
    \boldw=(b,0,\boldsymbol 0).
\end{align}
\end{theorem}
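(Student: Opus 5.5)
The plan is to turn the uniform-margin condition into a linear system and show that, almost surely, this system has the unique solution $(b,0,\boldsymbol 0)$.

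\textbf{Step 1: rewrite the margins as a linear system.} Using $y_i^2=1$ and setting $s_i:=y_ia_i\in\{-1,1\}$, the condition $y_i\boldw^\top\boldx_i=b$ becomes
\begin{align}
    w_c+\gamma s_i\,w_s+(y_i\boldsymbol\epsilon_i)^\top\boldw_\epsilon=b,\qquad i=1,\ldots,n.
\end{align}
Stack these equations as $\boldsymbol M\boldw=b\boldsymbol 1$. Here $\boldsymbol M\in\mathbb R^{n\times(q+2)}$ has columns $\boldsymbol 1$, $\gamma\boldsymbol s$, and the $q$ columns of the matrix $\boldsymbol G$ whose $i$-th row is $y_i\boldsymbol\epsilon_i^\top$. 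The vector $(b,0,\boldsymbol 0)$ clearly satisfies the system. So it suffices to show that $\boldsymbol M$ has full column rank $q+2$. The hypothesis $d=q+2<n$ makes this possible.

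\textbf{Step 2: the Gaussian block is independent of the labels.} The $\boldsymbol\epsilon_i$ are independent of $(y_i,a_i)$ and symmetric. Hence, conditional on $(\boldsymbol y,\boldsymbol a)$, the rows $y_i\boldsymbol\epsilon_i$ are still i.i.d.\ $\mathcal N(\boldsymbol 0,\bm I_q)$. So $\boldsymbol G$ is a standard Gaussian $n\times q$ matrix that is independent of $\boldsymbol s$.

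\textbf{Step 3: a rank argument conditional on $\boldsymbol s$.} Assume first that $\boldsymbol 1$ and $\boldsymbol s$ are linearly independent, i.e.\ both values $s_i=1$ (majority group) and $s_i=-1$ (minority group) occur. Let $\boldsymbol P$ be the orthogonal projector onto $\operatorname{span}\{\boldsymbol 1,\boldsymbol s\}^\perp$, which has dimension $n-2\ge q+1$. Conditional on $\boldsymbol s$, the matrix $\boldsymbol P\boldsymbol G$ consists of $q$ i.i.d.\ Gaussian vectors in a space of dimension $n-2>q$. These are linearly independent almost surely, since the set of rank-deficient configurations has Lebesgue measure zero. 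Therefore $\operatorname{rank}(\boldsymbol M)=2+q$, the solution of $\boldsymbol M\boldw=b\boldsymbol 1$ is unique, and it equals $(b,0,\boldsymbol 0)$.

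\textbf{Main obstacle.} The delicate point is the event on which all $s_i$ are equal, i.e.\ no minority (or no majority) group appears in the sample. On that event the first two columns of $\boldsymbol M$ are collinear, and any $(w_c,w_s)$ with $w_c\pm\gamma w_s=b$ yields uniform margins. So the statement ``with probability one'' has to be read conditional on both groups being present; this event has probability $1-\kappa_{\mathrm{tr}}^n-(1-\kappa_{\mathrm{tr}})^n$, which tends to one because $0<\rho_{\mathrm{tr}}<1$. I would state this conditioning explicitly, in the same way the proof in Appendix~\ref{app:white-proofs} conditions on $\sum_iy_i=0$.

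Note that the argument does not use $\gamma>1$; that assumption matters only for comparing with the unwhitened max-margin solution. Finally, by Section~\ref{sec:relations}, the whitened max-margin classifier attains uniform margins whenever such a classifier exists. Combined with the result above, this shows that whitening recovers the core feature in this underparameterized case.
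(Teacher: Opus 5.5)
The paper gives no proof of this statement; it imports it from \citet{puli_2023}, so there is no internal argument to compare yours against.

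Your rank argument is correct and self-contained. Multiplying the $i$-th margin equation by $y_i$ gives $\boldsymbol M\boldw=b\boldsymbol 1$ with $\boldsymbol M=\operatorname{diag}(\boldsymbol y)\boldsymbol X$. Uniqueness of $(b,0,\boldsymbol 0)$ is then exactly full column rank of the design matrix. Projecting the Gaussian block onto $\operatorname{span}\{\boldsymbol 1,\boldsymbol s\}^\perp$ cleanly separates the two deterministic columns from the $q$ random ones.

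Your caveat is also right, and it is a defect of the statement rather than a gap in your proof. Consider the event $\{s_1=\cdots=s_n\}$, which has probability $\kappa_{\mathrm{tr}}^n+(1-\kappa_{\mathrm{tr}})^n>0$. On it, every $\boldw=(b-\gamma t s_1,\,t,\,\boldsymbol 0)$ has uniform margin $b$. So ``with probability one'' holds only conditionally on both groups appearing, or with probability tending to one as $n\to\infty$.

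On the same event the columns $\boldsymbol y$ and $\gamma\boldsymbol a$ of $\boldsymbol X$ are collinear. The paper's claim that $\widetilde{\boldSigma}$ is positive definite with probability one therefore fails there too. Proposition~\ref{prop:uncentered-whitening-core-only} needs the same conditioning.

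You are also right that $\gamma>1$ plays no role in this uniqueness claim; $\gamma\neq 0$ suffices.
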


Thus, any set of coefficients with uniform margins in this DGP relies exclusively on the
core feature. It remains to show that the max-margin classifier fitted on
whitened data has uniform margins.

Because we are working with uncentered data, we work with the empirical second-moment matrix
\begin{align}
    \widetilde{\boldSigma}
    :=
    \frac{1}{n-1}\sum_{i=1}^n\boldx_i\boldx_i^\top.
\end{align}
Under the conditions of Theorem~\ref{thm:puli-uniform-margin},
$\widetilde{\boldSigma}$ is positive definite with probability one. For
$\widetilde{\boldz}_i:=\widetilde{\boldSigma}^{-1/2}\boldx_i$, set
$\boldw:=\widetilde{\boldSigma}^{-1/2}\boldv$. Then
$\boldv^\top\widetilde{\boldz}_i=\boldw^\top\boldx_i$ and
$\|\boldv\|_2^2=\boldw^\top\widetilde{\boldSigma}\boldw$. Consequently, the
max-margin classifier fitted on the whitened data can be written as
\begin{align}
    \hat{\boldw}_{\mathrm{mm},\mathrm{whiten}}
    =
    \arg\min_{\boldw\in\mathbb R^d}
    \boldw^\top\widetilde{\boldSigma}\boldw
    \quad\text{s.t.}\quad
    y_i\boldw^\top\boldx_i\geq1,
    \quad i=1,\ldots,n.
\label{eq:uncentered-whitened-mm}
\end{align}

\begin{proposition}[Core-only prediction after whitening]
\label{prop:uncentered-whitening-core-only}
Under the conditions of Theorem~\ref{thm:puli-uniform-margin}, the coefficients
in \eqref{eq:uncentered-whitened-mm} satisfy
\begin{align}
    \hat{\boldw}_{\mathrm{mm},\mathrm{whiten}}
    =
    (1,0,\boldsymbol 0)
\end{align}
with probability one.
\end{proposition}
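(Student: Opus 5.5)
The plan is to show that the vector $\boldw^\star:=(1,0,\boldsymbol 0)$ solves \eqref{eq:uncentered-whitened-mm}, and then to establish uniqueness. Uniqueness comes from strict convexity: under the conditions of Theorem~\ref{thm:puli-uniform-margin}, the matrix $\widetilde{\boldSigma}$ is positive definite with probability one, so the objective $\boldw^\top\widetilde{\boldSigma}\boldw$ is strictly convex. The feasible set is a nonempty polyhedron. It is nonempty because $\boldw^\star$ satisfies $y_i\boldw^{\star\top}\boldx_i=y_i^2=1$ for all $i$. Hence the minimizer exists and is unique, and it suffices to verify that $\boldw^\star$ is optimal.

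For optimality, I would use the uniform-margin reformulation from Section~\ref{sec:relations}, adapted to the uncentered second-moment matrix. Writing $m_i(\boldw)=y_i\boldw^\top\boldx_i$ and using $y_i^2=1$, we have
\begin{align}
    \boldw^\top\widetilde{\boldSigma}\boldw=\frac1{n-1}\sum_{i=1}^n(\boldw^\top\boldx_i)^2=\frac1{n-1}\sum_{i=1}^n m_i(\boldw)^2 .
\end{align}
For any feasible $\boldw$ we have $m_i(\boldw)\ge1$, hence $m_i(\boldw)^2\ge1$. The objective is therefore at least $n/(n-1)$, with equality if and only if $m_i(\boldw)=1$ for every $i$. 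The point $\boldw^\star$ attains this bound, since its margins are all exactly one. So $\boldw^\star$ is a minimizer, and by uniqueness $\hat{\boldw}_{\mathrm{mm},\mathrm{whiten}}=\boldw^\star$.

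A cross-check, and the way to connect the result to Theorem~\ref{thm:puli-uniform-margin}, is as follows. The equality case shows that every minimizer has uniform margins $b=1$. Theorem~\ref{thm:puli-uniform-margin} then forces it to equal $(1,0,\boldsymbol 0)$ with probability one. This gives uniqueness independently of the positive-definiteness argument, and the final statement is the intersection of two probability-one events.

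There is no real obstacle here. The only points needing care are the almost-sure positive definiteness of $\widetilde{\boldSigma}$ (asserted in the text for $d<n$ with Gaussian noise coordinates) and the observation that a uniform margin is actually achievable in this noiseless-core DGP, which is exactly what makes the lower bound $n/(n-1)$ tight.
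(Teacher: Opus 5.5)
Your proof is correct to the same standard as the paper's, and its first half is the paper's argument. Both use $\boldw^\top\widetilde{\boldSigma}\boldw=\frac1{n-1}\sum_i m_i(\boldw)^2\ge n/(n-1)$ on the feasible set and note that $(1,0,\boldsymbol 0)$ attains this bound. Hence every minimizer has all margins equal to one.

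You differ only in how you get from unit margins to $(1,0,\boldsymbol 0)$. The paper invokes Theorem~\ref{thm:puli-uniform-margin}, while you use strict convexity from $\widetilde{\boldSigma}\succ 0$. Your route is self-contained and shows what the external theorem does here. Let $\boldsymbol X$ be the design matrix with rows $\boldx_i^\top$. Unit margins mean $\boldsymbol X\boldw=\boldsymbol y$, and $(1,0,\boldsymbol 0)$ is one solution. So on a given sample, the statement that uniform margins force $\boldw=(b,0,\boldsymbol 0)$ holds exactly when $\ker\boldsymbol X=\{\boldsymbol 0\}$, that is, exactly when $\widetilde{\boldSigma}\succ0$. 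Your cross-check via Theorem~\ref{thm:puli-uniform-margin} is therefore not independent of the positive-definiteness argument; it is the same fact in another guise. Your version also makes clear that $\gamma>1$ plays no role: any $\gamma\neq0$ suffices.

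The point you flagged as needing care is where the statement actually breaks, for both routes.
\begin{itemize}
\item Consider the event that $a_i=y_i$ for all $i$, which has probability $\kappa_{\mathrm{tr}}^n>0$, or that $a_i=-y_i$ for all $i$, which has probability $(1-\kappa_{\mathrm{tr}})^n>0$.
\item On either event the first two columns of $\boldsymbol X$ are collinear, so $\widetilde{\boldSigma}$ is singular.
\item On the first event, every $(t,(1-t)/\gamma,\boldsymbol 0)$ has all margins equal to one and is therefore a minimizer. Theorem~\ref{thm:puli-uniform-margin} as restated fails on the same event.
\end{itemize}

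So instead of citing the text's claim that $\widetilde{\boldSigma}$ is positive definite with probability one, condition explicitly on $(a_1,\dots,a_n)\neq\pm(y_1,\dots,y_n)$. On that event the first two columns are linearly independent. Project the $q$ noise columns onto the $(n-2)$-dimensional orthogonal complement of those two columns. The noise columns are Gaussian and independent of $(y_i,a_i)_i$, and $q<n-2$, so the projections are linearly independent almost surely. Hence $\boldsymbol X$ has full column rank, and your argument goes through. The conclusion then holds with probability $1-\kappa_{\mathrm{tr}}^n-(1-\kappa_{\mathrm{tr}})^n$, which tends to one but is not one.
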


\begin{proof}
For $m_i(\boldw):=y_i\boldw^\top\boldx_i$, we have
\begin{align}
    \boldw^\top\widetilde{\boldSigma}\boldw
    =
    \frac{1}{n-1}\sum_{i=1}^n
    \bigl(\boldw^\top\boldx_i\bigr)^2
    =
    \frac{1}{n-1}\sum_{i=1}^n m_i(\boldw)^2,
\end{align}
where the second equality uses $y_i^2=1$. Every set of coefficients in
\eqref{eq:uncentered-whitened-mm} therefore satisfies
\begin{align}
    \boldw^\top\widetilde{\boldSigma}\boldw
    \geq
    \frac{n}{n-1}.
\end{align}
The core-only coefficients $\boldw_c=(1,0,\boldsymbol 0)$ attain this lower
bound, since
\begin{align}
    m_i(\boldw_c)=y_i^2=1,
    \qquad i=1,\ldots,n.
\end{align}
Consequently, the minimizer of \eqref{eq:uncentered-whitened-mm} has uniform
margins equal to one. Theorem~\ref{thm:puli-uniform-margin}, applied with
$b=1$, then gives
\begin{align}
    \hat{\boldw}_{\mathrm{mm},\mathrm{whiten}}
    =
    (1,0,\boldsymbol 0),
\end{align}
which proves the result.
\end{proof}

\clearpage
\subsection{Numerical confirmation}

In this section, we numerically confirm Theorem~\ref{thm:whitened-max-margin-limits} and Proposition~\ref{prop:uncentered-whitening-core-only}.
In Figure~\ref{fig:theorem2-confirmation} we show the three terms that appear in Theorem~\ref{thm:whitened-max-margin-limits}: the
core coefficient, the spurious coefficient, and the normalized squared norm of the noise coefficients. For each, we analyze how they evolve as $q$ and $n$ grow, for $\psi = 1.2$.
All three quantities move toward their corresponding theoretical limits over
the displayed sample sizes.

\begin{figure}[htbp]
    \centering
    \includegraphics[width=\textwidth]{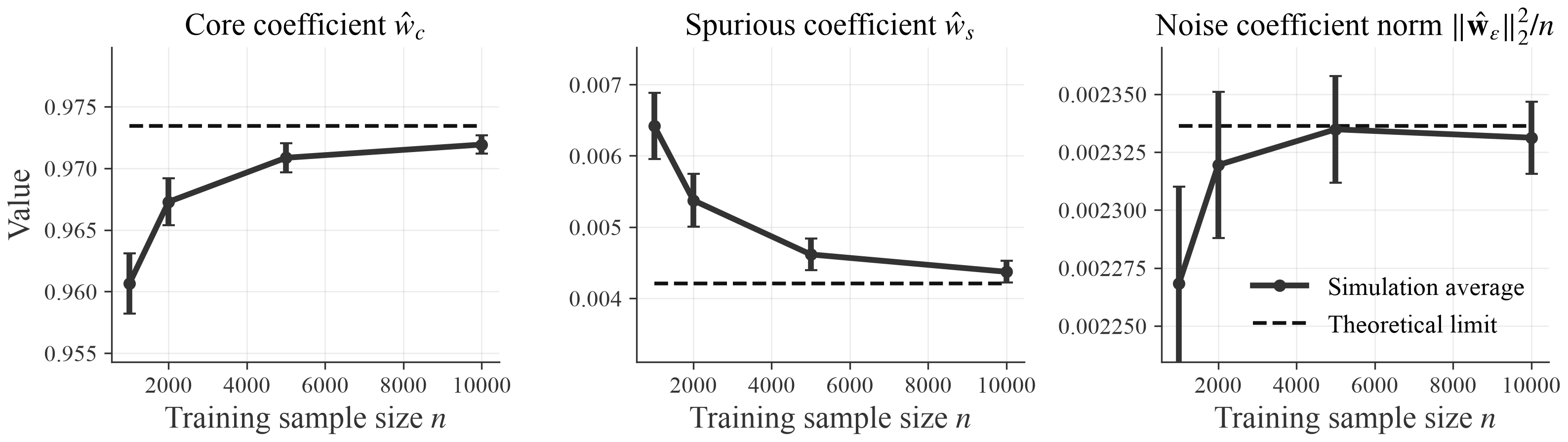}
    \caption{Estimated core coefficient $\hat w_c$ (left), spurious coefficient $\hat w_s$ (center), and normalized squared noise-coefficient norm $\lVert\hat{\boldw}_\epsilon\rVert_2^2/n$ (right) as functions of the training sample size $n$ for $q/n=1.2$. Solid black curves are simulation averages, vertical error bars are 95\% confidence intervals, and dashed lines are the corresponding limits in Theorem~\ref{thm:whitened-max-margin-limits}.}
    \label{fig:theorem2-confirmation}
\end{figure}

In Figure~\ref{fig:proposition2-confirmation} we show the same three fitted quantities for $\psi = 0.8$. It confirms Proposition~\ref{prop:uncentered-whitening-core-only}:
For every simulated training sample, we empirically whiten the uncentered
features using the second moment matrix. Because
logistic regression and the max-margin classifier only converge in direction, 
we divide each coefficient by its minimum signed training
margin. 

\begin{figure}[htbp]
    \centering
    \includegraphics[width=\textwidth]{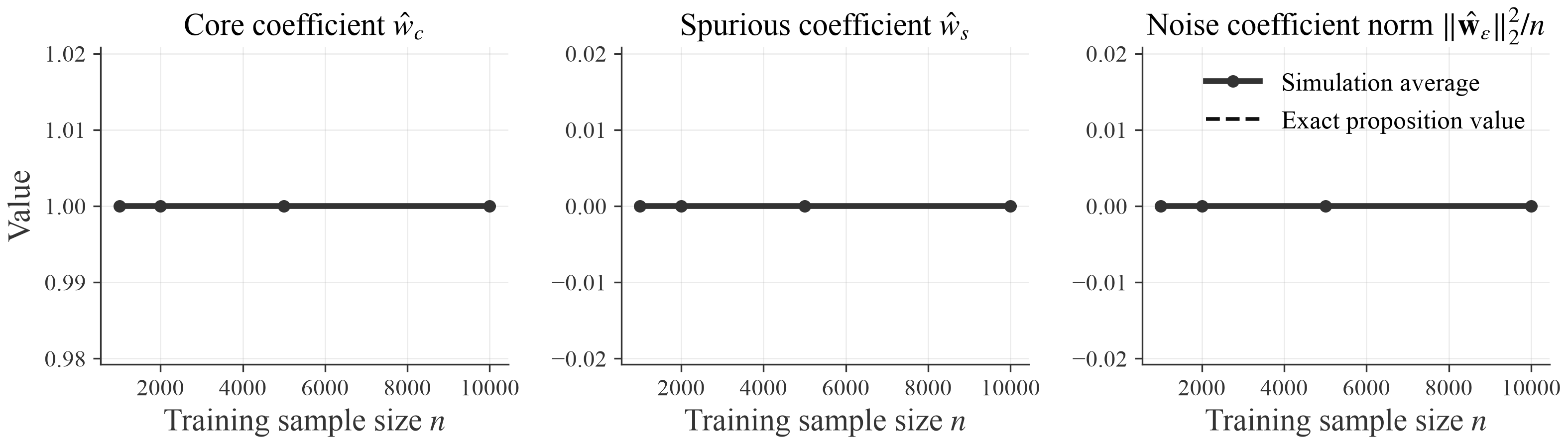}
    \caption{Fitted core coefficient $\hat w_c$ (left), spurious coefficient $\hat w_s$ (center), and normalized squared noise-coefficient norm $\lVert\hat{\boldw}_\epsilon\rVert_2^2/n$ (right) as functions of the training sample size $n$ for $q/n=0.8$. For each simulated sample, logistic regression is fitted after uncentered empirical whitening and the original-coordinate coefficients are normalized to unit minimum training margin. Solid black curves are simulation averages, vertical error bars are 95\% confidence intervals, and dashed lines are the exact core-only values from Proposition~\ref{prop:uncentered-whitening-core-only}.}
    \label{fig:proposition2-confirmation}
\end{figure}

\clearpage
\section{The relationship between whitening and spectral decoupling}
\label{app:SD}

A related approach to whitening is \textit{spectral decoupling} (SD) \citep{pezeshki2021gradientstarvationlearningproclivity}, which adds a penalty term to the loss function that is proportional to the squared norm of the model's predictions. For
a linear model, its objective can be written as
\begin{align}
\min_{\boldw \in \calS}
\frac1n\sum_{i=1}^n \log\bigl(1+\exp(-\target_i\boldw^\top \boldx_i)\bigr)
+\frac{\rho}{2n}\sum_{i=1}^n
\left(\boldw^\top\boldx_i-\gamma_{\target_i}\right)^2.
\label{eq:sd-obj}
\end{align}
For centered data and $\boldw \in \calS$, the change of variables in \eqref{eq:whiten-map} gives
\begin{align}
\target_i\,\boldw^\top\boldx_i = \target_i\,\boldv^\top\boldz_i
\qquad\text{and}\qquad
\boldw^\top\hat{\boldSigma}\,\boldw = \|\boldv\|_2^2.
\label{eq:sd-identities}
\end{align}
Substituting \eqref{eq:sd-identities} into \eqref{eq:sd-obj} and setting $\gamma_{\target_i}=0$ yields the optimization problem
\begin{align}
\min_{\boldv\in\mathcal{S}_z}\;
\frac1n\sum_{i=1}^n \log\!\bigl(1+\exp(-\target_i\,\boldv^\top \boldz_i)\bigr)
\;+\;\frac{\rho(n-1)}{2n}\,\|\boldv\|_2^2,
\label{eq:sd-equiv-whitened}
\end{align}
which is logistic regression on the whitened data
$\{\boldz_i\}_{i=1}^n$ with $\ell_2$ regularization.
The constraint $\boldv\in\mathcal{S}_z$ is redundant, since the logistic loss is unaffected by components of $\boldv$ in the null space of the whitened data.
Thus, SD with centered data and $\gamma_{\target_i}=0$ is equivalent to logistic regression on whitened data and an $\ell_2$ penalty.
The $\gamma_{\target_i}$ encourages logits $\boldw^\top\boldx_i$ to be close to a target value,
whereas whitening encourages the logits to be close to zero (while satisfying the margin constraints).
SD was motivated to address the simplicity bias of DNNs, and our analysis shows that whitening addresses the same issue for the max-margin classifier without this additional hyperparameter.

\section{Empirical whitening and the nonlinear shrinkage estimator}
\label{app:whitening-estimators}
\addtocounter{figure}{1}

In this section, we briefly investigate the difference between empirical whitening and whitening with the nonlinear shrinkage estimator introduced in Section~\ref{sec:est}.

We first focus on the synthetic data-generating process introduced in Section~\ref{sec:synthetic-dgp}.
Figure~\ref{fig:synthetic-perf-empirical} reports the empirical-whitening
counterpart to Figure~\ref{fig:synthetic-perf}. We observe poor performance of empirical whitening at $q/n =1$.
Theorem~\ref{thm:whitened-max-margin-limits} shows that as $n$ approaches $q$, whitening drastically scales the coefficients assigned to noise features.
Intuitively, when $q \approx n$, the empirical covariance matrix becomes nearly singular, so whitening strongly amplifies low-variance noise directions and the classifier can now rely on these to classify points.
This does not occur at $q < n$, as it becomes very unlikely the classifier can fit the training labels through noise.
With $q > n$, the classifier can spread the fit across more noise directions with smaller coefficients.

\begin{figure}[H]
    \centering
    \includegraphics[width=\textwidth]{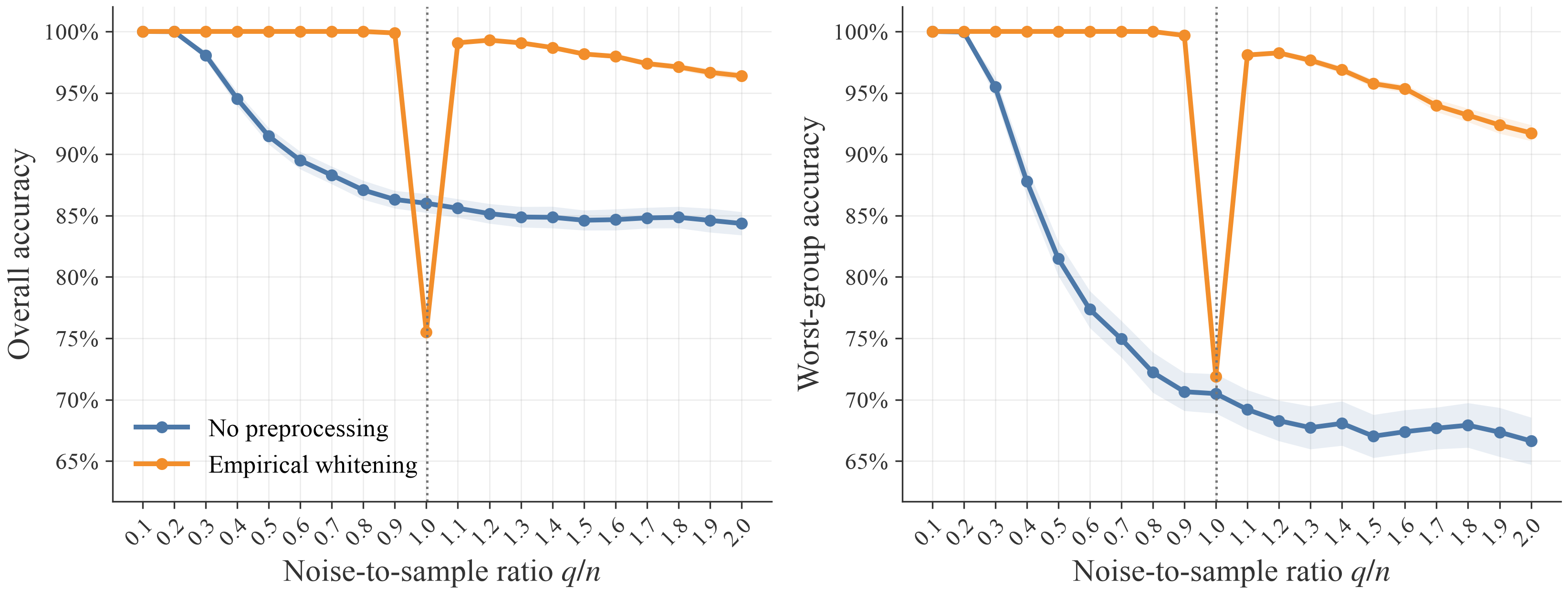}
    \caption{OOD test accuracy as a function of the noise-to-sample ratio $q/n$ for $n=1000$. We compare ERM with ERM after empirical whitening in terms of overall accuracy (left) and worst-group accuracy (right), where groups are defined by the pair $(y, a)$. Points are means over 100 simulations, shaded bands are 95\% confidence intervals, and the dotted line marks $q/n=1$.}
    \label{fig:synthetic-perf-empirical}
\end{figure}

\clearpage
Figure~\ref{fig:whitening-estimator-comparison} compares empirical whitening
with whitening with the nonlinear shrinkage estimator on the main empirical benchmarks. Although using the nonlinear shrinkage estimator improves the performance on average for Waterbirds and CelebA, the difference is not statistically significant.
For MultiNLI, the performance is close to equivalent, as there are many more data points relative to the dimension, and the difference between $\hat{\boldSigma}^{-1}$ and $\hat{\boldSigma}^{-1}_{\mathrm{LW}}$ becomes very small.

\begin{figure}[H]
    \centering
    \includegraphics[width=\textwidth]{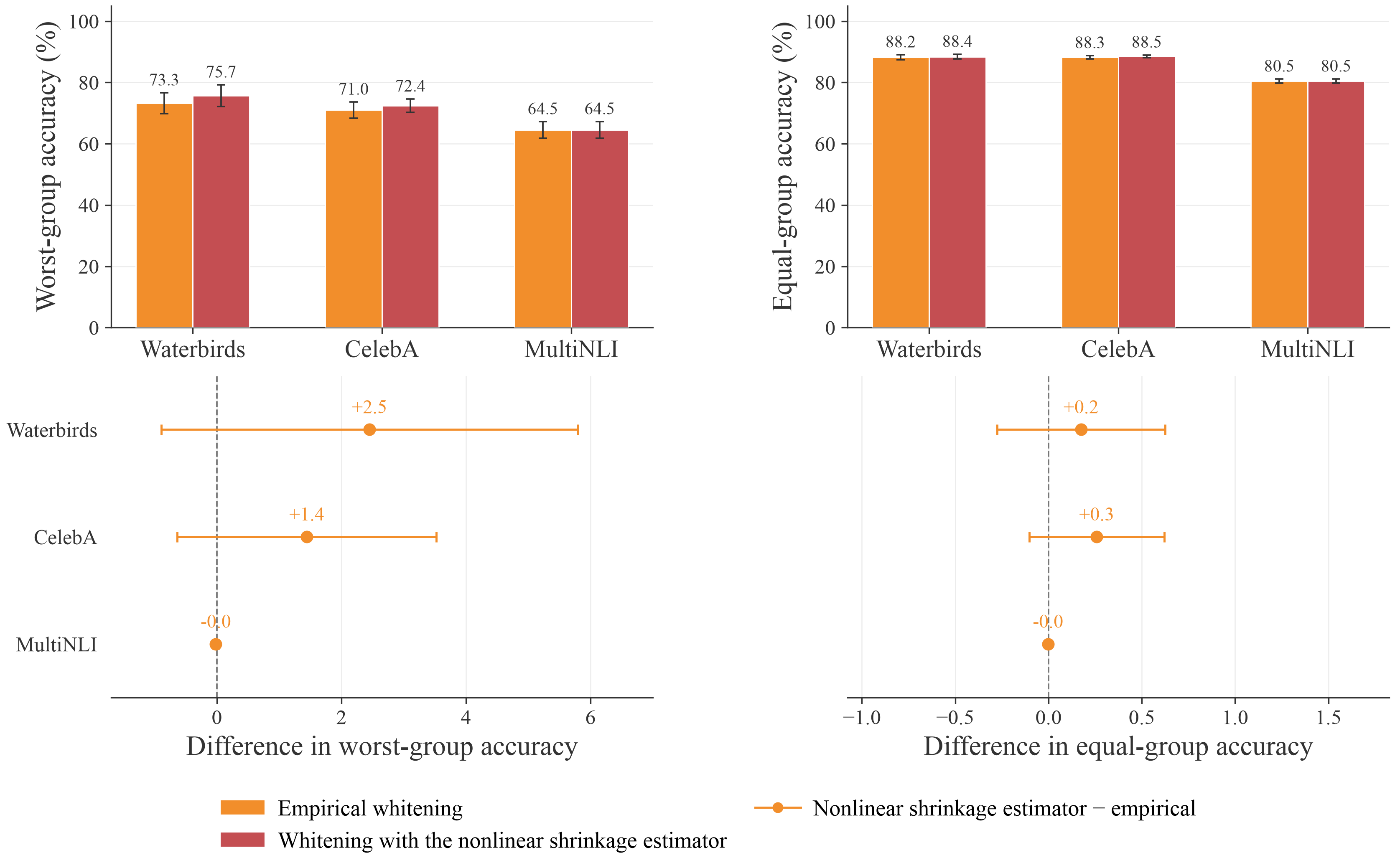}
    \caption{Worst-group (left) and equal-group (right) test accuracy for ERM with empirical whitening and whitening with the nonlinear shrinkage estimator (top), together with mean within-seed paired differences for whitening with the nonlinear shrinkage estimator minus empirical whitening (bottom). Bars and points represent averages across seeds (ten for ResNet-50, five for BERT), and error bars are 95\% confidence intervals.}
    \label{fig:whitening-estimator-comparison}
\end{figure}

\section{Additional empirical results}
\label{app:additional-empirical-results}

Figure~\ref{fig:empirical-appendix-backbones} extends the main comparison to
DINO ViT-B/16 on Waterbirds and CelebA and DeBERTa-v3-base on MultiNLI. We observe results similar to those in Figure~\ref{fig:empirical-main}, with the exception of the comparison between whitening and standardization for CelebA, where standardization and whitening now perform comparably.
We observe a much higher worst-group accuracy for DeBERTa-v3-base on MultiNLI than BERT, most likely due to the various improvements made since the BERT model.

\begin{figure}[h]
    \centering
    \includegraphics[width=\textwidth]{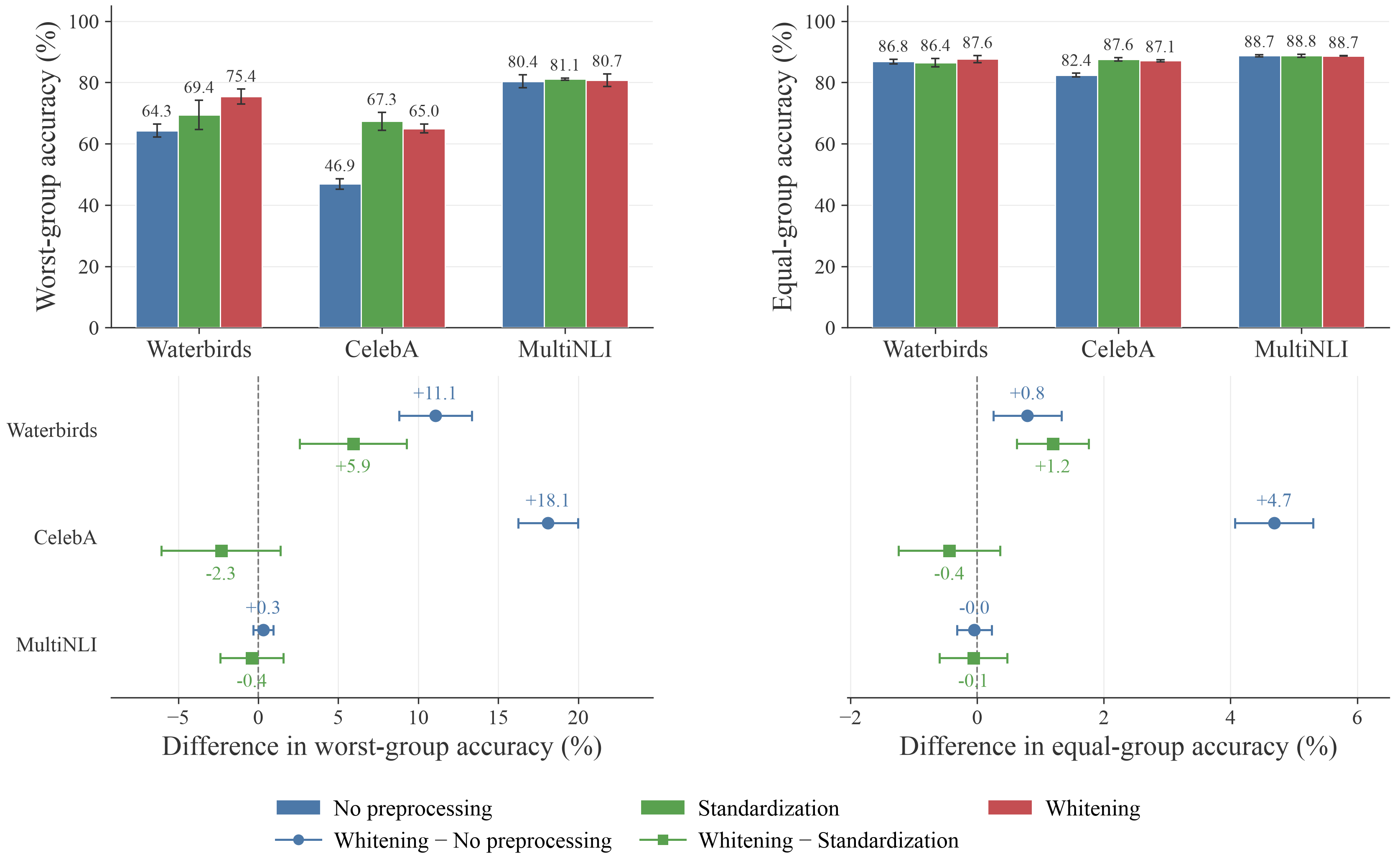}
    \caption{Worst-group (left) and equal-group (right) test accuracy for DINO ViT-B/16 on Waterbirds and CelebA and DeBERTa-v3-base on MultiNLI under three preprocessing variants: no preprocessing, standardization, and whitening (top), together with mean within-seed paired differences for whitening minus each reference (bottom). Bars and points represent averages across three seeds, and error bars are 95\% confidence intervals.}
    \label{fig:empirical-appendix-backbones}
\end{figure}

Figure~\ref{fig:empirical-comparison-equal-group} is the equivalent of Figure~\ref{fig:empirical-comparison} but for equal-group accuracy.
We observe similar results, with whitening improving equal-group accuracy for DFR and AFR on Waterbirds. We do observe a minor decrease in equal-group accuracy when AFR is applied to CelebA ($-0.8$ pp vs. no preprocessing, and $-0.6$ pp vs. standardization), as well as for NT on CelebA ($-0.8$ pp vs. standardization).

\begin{figure}[h]
    \centering
    \includegraphics[width=\textwidth]{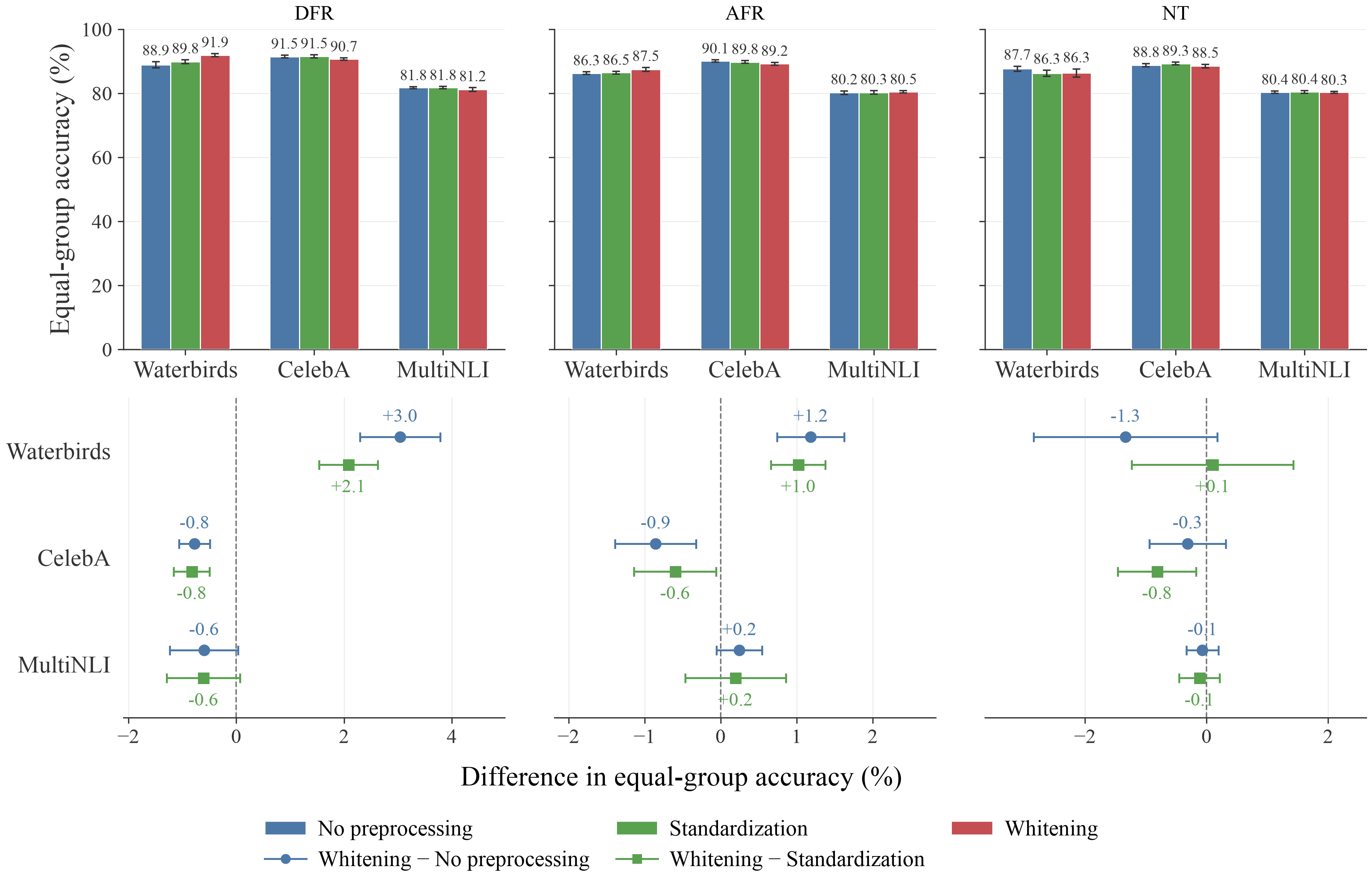}
    \caption{Equal-group test accuracy for Deep Feature Reweighting (DFR), Automatic Feature Reweighting (AFR), and NeuronTune (NT) under three preprocessing variants: no preprocessing, standardization, and whitening (top), together with mean within-seed paired differences for whitening minus no preprocessing and whitening minus standardization (bottom). Bars and points show averages across seeds (ten for ResNet-50, five for BERT), and error bars show 95\% confidence intervals.}
    \label{fig:empirical-comparison-equal-group}
\end{figure}

\clearpage
\section{Details on empirical evaluation}
\label{app:empirical-evaluation}

\subsection{Datasets}
\label{app:datasets}

\textbf{Waterbirds} \citep{sagawa2020distributionallyrobustneuralnetworks}. is a binary
classification task in which the target is whether an
image contains a waterbird or a landbird. The background---water or
land---is spuriously correlated with the target in the training and validation
sets. We therefore define four groups from the bird type and background type.

\textbf{CelebA} \citep{liu2015faceattributes}, the target is whether a person has
blond hair and the spurious
attribute is gender. Blond hair is substantially more frequent among women in
the training and validation sets, yielding four groups defined by hair color
and gender.

\textbf{MultiNLI} \citep{Bowman_MultiNLI} is a three-class natural-language inference
task in which the target
indicates whether a hypothesis is entailed by, neutral with respect to, or
contradicted by a premise. Following the standard spurious correlation setup,
we use the presence of a negation word in the hypothesis as the spurious
attribute; negation words are disproportionately associated with contradiction
examples. The combination of the three target labels and this binary attribute
defines six groups.

\begin{table}[H]
     \caption{Training and validation split sizes for Waterbirds, CelebA, and MultiNLI.
    The table reports total observations and counts by class $y$ and group $g$.
    For Waterbirds and CelebA, groups are indexed by $g=2y+a+1$, where $a$ is
    the binary spurious attribute. For MultiNLI, groups are indexed by
    $g=2y+a$.}
    \setlength{\tabcolsep}{3pt}
    \begin{tabular}{>{\raggedright\arraybackslash}p{1.9cm} >{\raggedright\arraybackslash}p{3.6cm} >{\raggedright\arraybackslash}p{3.6cm} >{\raggedright\arraybackslash}p{3.9cm}}
        \toprule
        \textbf{Statistic} & \textbf{Waterbirds} & \textbf{CelebA} & \textbf{MultiNLI} \\
        \hline
        Train total & 4,795 & 162,770 & 206,175 \\
        \hline
        Train per class & \makecell[l]{$y=0$: 3,670\\$y=1$: 1,125} & \makecell[l]{$y=0$: 138,503\\$y=1$: 24,267} & \makecell[l]{$y=0$: 68,656\\$y=1$: 68,897\\$y=2$: 68,622} \\
        \hline
        Train per group & \makecell[l]{$g=1$: 3,486\\$g=2$: 184\\$g=3$: 56\\$g=4$: 1,069} & \makecell[l]{$g=1$: 66,874\\$g=2$: 71,629\\$g=3$: 1,387\\$g=4$: 22,880} & \makecell[l]{$g=0$: 57,498\\$g=1$: 11,158\\$g=2$: 67,376\\$g=3$: 1,521\\$g=4$: 66,630\\$g=5$: 1,992} \\
        \midrule
        Val.\ total & 1,199 & 19,867 & 82,462 \\
        \hline
        Val.\ per class & \makecell[l]{$y=0$: 945\\$y=1$: 254} & \makecell[l]{$y=0$: 16,811\\$y=1$: 3,056} & \makecell[l]{$y=0$: 27,448\\$y=1$: 27,562\\$y=2$: 27,452} \\
        \hline
        Val.\ per group & \makecell[l]{$g=1$: 898\\$g=2$: 47\\$g=3$: 13\\$g=4$: 241} & \makecell[l]{$g=1$: 8,276\\$g=2$: 8,535\\$g=3$: 182\\$g=4$: 2,874} & \makecell[l]{$g=0$: 22,814\\$g=1$: 4,634\\$g=2$: 26,949\\$g=3$: 613\\$g=4$: 26,655\\$g=5$: 797} \\
        \bottomrule
    \end{tabular}
    \label{tab:dataset_counts}
\end{table}

\subsection{Evaluation protocol}
\label{app:protocol}

\paragraph{Waterbirds.}
We use a ResNet-50 backbone trained for 50 epochs with SGD, momentum $0.9$, a
learning rate of $0.001$, weight decay $0.01$, and batch size 32. We use
standard random-crop and horizontal-flip augmentation, without early stopping
or validation-based checkpoint selection. We additionally fine-tune a
ViT-B/16 initialized from DINO self-supervised pretraining
\citep{caron2021dino} for 120 epochs with AdamW, $\beta_1=0.9$,
$\beta_2=0.95$, a learning rate of $10^{-5}$, weight decay $0.01$, batch size
32, and a cosine learning-rate schedule. For ResNet-50, we use the feature
vector preceding the classification layer; for ViT-B/16, we use the final
classification-token representation.

\paragraph{CelebA.}
We fine-tune an ImageNet-pretrained ResNet-50 for 50 epochs with SGD, momentum
$0.9$, a learning rate of $0.001$, weight decay $0.0001$, and batch size 128.
We use the same crop and flip augmentation as for Waterbirds, without early
stopping or validation-based checkpoint selection. We additionally fine-tune
the DINO-pretrained ViT-B/16 for 60 epochs with AdamW, $\beta_1=0.9$,
$\beta_2=0.95$, a learning rate of $10^{-5}$, weight decay $0.01$, batch size
128, and a cosine learning-rate schedule. We extract the same ResNet-50 and
ViT-B/16 representations as for Waterbirds.

\paragraph{MultiNLI.}
For the main comparison, we fine-tune BERT for five epochs with AdamW, a
learning rate of $10^{-5}$, weight decay $10^{-4}$, and batch size 16. We
additionally fine-tune DeBERTa-v3-base \citep{he2023debertav3} for three epochs
with AdamW, $\beta_1=0.9$, $\beta_2=0.95$, a learning rate of
$1.5\times10^{-5}$, weight decay $0.01$, batch size 64, and a cosine
learning-rate schedule with 1,000 warm-up steps. For DeBERTa-v3-base, we use
dropout $0.1$ and truncate inputs to 128 tokens. For both backbones, we extract
the first-token representation from the final hidden layer.

\paragraph{Last-layer estimation.} We fit preprocessing on the training
representations and apply it unchanged to validation and test. Given the
resulting embeddings
$\{\boldz_i\}_{i=1}^n$, we fit an $\ell_2$-regularized
\texttt{LogisticRegression} estimator from \texttt{scikit-learn}. For the
binary image tasks, let $n_y$ denote the number of training observations with
label $y$, and assign observation $i$ the inverse-class-frequency weight
$\omega_i=n/(2n_{\target_i})$. These weights have mean one, and the estimator
minimizes
\begin{align}
    \frac1n\sum_{i=1}^n \omega_i
    \log\!\left(
        1+\exp\!\left(-\target_i
        (\boldw^\top\boldz_i+b)\right)
    \right)
    +\frac{\tau}{2}\|\boldw\|_2^2.
    \label{eq:empirical-logistic-objective}
\end{align}
The intercept $b$ is not penalized. This weighting makes the two labels
contribute equally without using group labels. For MultiNLI, whose three
training-label counts are nearly equal, we fit the ordinary uniformly weighted
multinomial objective. Every reported fit is required to converge.

\paragraph{Regularization and model selection.}
We consider the 24-value grid for the $\ell_2$ penalty $\tau$:
\begin{align}
    \tau \in \{1,5\}\times 10^k,
    \qquad k\in\{-5,-4,\ldots,6\}.
\end{align}
Here, $\tau$ is defined relative to the average logistic loss in
\eqref{eq:empirical-logistic-objective}. The \texttt{scikit-learn} objective
uses the sum of the observation losses plus a regularization term with
coefficient $1/(2C)$. Dividing this objective by the training set size $n$
gives an average loss with regularization coefficient $1/(2nC)$. We therefore
set $C=(n\tau)^{-1}$ to account for the dataset size.

For each preprocessing method and representation seed, we select $\tau$ by
maximizing class-balanced validation accuracy. Let $\mathcal{Y}$ be the set of
$K$ target classes, let $\mathcal{V}_k=\{i\in\mathcal{V}:y_i=k\}$ denote the
validation observations in class $k$, and let $\hat y_i$ be the predicted
label. We define
\begin{align}
    \operatorname{Acc}_{\mathrm{bal}}
    = \frac{1}{K}\sum_{k\in\mathcal{Y}}
      \frac{1}{|\mathcal{V}_k|}
      \sum_{i\in\mathcal{V}_k}\mathbf{1}\{\hat y_i=y_i\}.
\end{align}
Thus, each class contributes equally, with each observation weighted
inversely by its class count. We resolve exact ties in favor of the larger
$\tau$.

\paragraph{Details on whitening.} The $\hat\delta_i$ terms introduced in Section~\ref{sec:est} are defined by \citet{ledoit2020analytical} in 
equation 4.3. Importantly, these adjustments are defined for when $n - 1 > p$. When $p>n-1$, we operate on lower-dimensional sample space and work with the $n\times n$ Gram matrix instead of the $p\times p$ empirical covariance matrix. 
We compute the shrinkage adjustments using this lower-dimensional representation and apply them to the nonzero eigenvalues of the empirical covariance matrix.

\subsection{Other approaches}
\label{app:other-approaches}

\textbf{Deep Feature Reweighting (DFR)} \citep{kirichenko2023layerretrainingsufficientrobustness}.
We divide the validation set into two group-stratified halves. From the first
half, we construct ten group-balanced samples by drawing the same number of
observations from every group. For each preprocessing variant and value of
$\tau$, we fit the preprocessing transform and linear probe separately on
each balanced sample. We map the resulting coefficients and intercepts to the
original representation coordinates and average them, which is equivalent to
averaging the logits of the ten probes. We select $\tau$ using worst-group
accuracy on the second validation half. After selection, we repeat this
group-balanced ten-sample procedure using the complete validation set as the
sampling pool. Thus, the final fits draw from the complete validation set, but
each individual fit remains group-balanced. The original DFR protocol uses
$\ell_1$-regularized logistic regression, whereas
\citet{labonte2023lastlayerretraininggrouprobustness} use $\ell_2$
regularization for last-layer retraining. We follow the latter approach.

\textbf{Automatic Feature Reweighting (AFR)} \citep{qiu2023simplefastgrouprobustness}.
We divide the validation set into two label-stratified halves, which we use for
adaptation and model selection, respectively. On the adaptation half, we use
the untransformed fine-tuned DNN to calculate the correct-class probability
$p_i$. If $n_y$ denotes the number of adaptation observations with label $y$,
we assign observation $i$ the unnormalized weight
$\exp(-\gamma p_i)/n_{y_i}$ and normalize the weights to have mean one. We
consider
\begin{align}
    \gamma \in \{1,2,4,8\}.
\end{align}
We fit the preprocessing transform on the adaptation representations and
minimize the weighted logistic loss with an $\ell_2$ penalty centered at the
original ERM head, expressed in the transformed coordinates. Thus, the usual
zero-centered $\ell_2$ penalty is replaced by
$\tau\|\boldw-\boldw_{\mathrm{ERM}}\|_2^2/2$. We jointly select $\gamma$
and $\tau$ using worst-group accuracy on the selection half, and
subsequently recompute the transform, weights, and classifier on the complete
validation set. Unlike the original AFR implementation, we optimize each
candidate to convergence rather than using early stopping.

\textbf{NeuronTune (NT)} \citep{pmlr-v267-zheng25s}.
We divide the validation set into two label-stratified halves. Let $h_{ij}$
denote coordinate $j$ of the raw representation of observation $i$, and let
$\hat y_i$ denote the prediction of the fine-tuned DNN. On the first half, we
calculate the score
\begin{align}
    s_{yj}
    ={}&
    \underset{i:y_i=y,\,\hat y_i\neq y_i}{\operatorname{median}}
    |h_{ij}|
    -
    \underset{i:y_i=y,\,\hat y_i=y_i}{\operatorname{median}}
    |h_{ij}|.
\end{align}
We remove coordinate $j$ when $s_{yj}>\delta$ for at least one class $y$, and
consider
\begin{align}
    \delta \in \{0,0.1,0.25,0.5\}.
\end{align}
For each value of $\delta$, the resulting coordinate set is held fixed across
preprocessing variants and values of $\tau$. We apply standardization or
whitening after coordinate removal, and fit the preprocessing transform and
linear probe on the second validation half. We jointly select $\delta$ and
$\tau$ using exact class-balanced accuracy on the first half. The selected
second-half fit is then evaluated on the test set without a complete-validation
refit. The original method iterates coordinate identification and gradient descent, and uses a custom criterion for model selection. 
We use a single identification step, as we use the standard optimization techniques from \texttt{scikit-learn}.
We find that selecting based on class-balanced accuracy performs better than the custom criterion introduced by \citet{pmlr-v267-zheng25s}.
While we are aware that this is a slightly different implementation of the method, we are primarily interested in isolating the effect of different preprocessing techniques, rather than optimizing worst-group accuracy.

We use the same ridge grid and estimator settings as described in Appendix~\ref{app:protocol}.

\newpage
\bibliographystyle{iclr2027_conference}
\bibliography{bibliography}

\end{document}